\title{Deep Radial-Basis
Value Functions for Continuous Control}
\author{Kavosh Asadi\textsuperscript{\rm 1,2}, Neev Parikh\textsuperscript{\rm 2}, Ronald E. Parr\textsuperscript{\rm 3}, George D. Konidaris\textsuperscript{\rm 2}, Michael L. Littman\textsuperscript{\rm 2}
\\}
\begin{document}

\maketitle

\newcommand{\Qhat}{\widehat Q}
\newcommand{\fancyS}{\mathcal{S}}
\newcommand{\fancyA}{\mathcal{A}}
\newcommand{\fancyR}{\mathcal{R}}
\newcommand{\fancyO}{\mathcal{O}}
\newcommand\myeq{\stackrel{\mathclap{\normalfont\mbox{def}}}{=}}
\newcommand{\abs}[1]{|#1|}
\newcommand{\expectation}[1]{\mathbf{E}_{#1}}
\newcommand\norm[1]{\left\lVert#1\right\rVert}
\newtheorem{theorem}{Theorem}

\begin{abstract}
A core operation in reinforcement learning (RL) is finding an action that is optimal with respect to a learned value function. This operation is often challenging when the learned value function takes continuous actions as input. We introduce deep radial-basis value functions (RBVFs): value functions learned using a deep network with a radial-basis function (RBF) output layer. We show that the maximum action-value with respect to a deep RBVF can be approximated easily and accurately. Moreover, deep RBVFs can represent any true value function owing to their support for universal function approximation. We extend the standard DQN algorithm to continuous control by endowing the agent with a deep RBVF. We show that the resultant agent, called RBF-DQN, significantly outperforms value-function-only baselines, and is competitive with state-of-the-art actor-critic algorithms.
\end{abstract}
\section{Introduction}
A fundamental object of interest in RL is the value ($Q$) function, which quantifies the expected return for taking  action $a$ in state $s$. Many RL algorithms, such as Q-learning~\citep{Q_learning} and actor-critic~\citep{barto1983neuronlike}, learn an approximation of the $Q$ function from environmental interactions. When using function approximation to learn the $Q$ function, the agent has a parameterized function class, and learning consists of finding a parameter setting $\theta$ for the approximate value function $\Qhat(s,a;\theta)$ that accurately represents the true $Q$ function. 
A core operation here is finding an optimal action with respect to the learned value function, specifically $\arg \max_{a\in\fancyA} \Qhat(s,a;\theta)$. The need for performing this operation arises not just for action selection, but also when learning $\Qhat$ itself via bootstrapping \cite{RL_book}.

The optimization problem $\arg \max_{a \in \fancyA} \widehat Q(s,a;\theta)$ is generally challenging if the action space $\fancyA$ is continuous, because the learned value function $\Qhat(s,a;\theta)$ could have many local maxima and saddle points; therefore, na\"ive approaches such as gradient ascent can be expensive and inaccurate~\citep{actor_expert}. Value-function-only algorithms have thus remained under-explored in continuous control.

It is trivial to solve $\arg \max_{a \in \fancyA} \widehat Q(s,a;\theta)$ when  $\fancyA$ is discrete. Therefore, a simple approach for continuous actions is to partition the continuous action space. By performing this discretization, we can essentially treat the original continuous problem as a discrete one. While this approach can be effective in highly constrained settings, to uniformly cover the action space, we need a partition size that grows exponentially with the number of action dimensions. \citet{pazis2011generalized} attempt to combat this issue via approximate linear programming, but scaling this approach to large domains remains open. 

Another line of work has shown the benefits of using function classes that are conducive to efficient action maximization. For example, \citet{naf} explored function classes that can capture an arbitrary dependence on the state, but only a quadratic dependence on the action. Given a quadratic action dependence, \citet{naf} showed how to compute $\arg \max_{a\in\fancyA} \Qhat(s,a;\theta)$ quickly. A more general idea is to use input--convex neural networks~\citep{icnn} that restrict $\Qhat(s,a;\theta)$ to convex (or concave) functions with respect to $a$, so that the maximization problem can be solved efficiently using convex-optimization techniques~\citep{convex}. Unfortunately, these approaches are unable to support universal function approximation~\citep{ufa_feed_forward}, and may lead into inaccurate value functions regardless of the amount of experience provided to the agent.

We introduce deep radial-basis value functions (RBVFs): $Q$ functions approximated by a standard deep neural network augmented with an RBF output layer.  We show that deep RBVFs enable us to efficiently and accurately identify an approximately optimal action without impeding universal function approximation. 

Prior work in RL used RBVFs for problems with continuous states. (See, for example, \citet{geramifard_tutorial} or Section 9.5.5 of \citet{RL_book} for a discussion.) These results often use RBFs to transform the original state space to a higher-dimensional space in which linear function approximation yields accurate results. Our use of RBFs is fundamentally different in two ways: a) we apply RBFs to the action space not the state space, and b) we use deep neural networks to learn the centroids that constitute the RBFs. Applying RBFs to the action space allows us to show that the global maximum of $\Qhat(s,\cdot;\theta)$ is approximately equal to the value of the best centroid---$\max_{a} \Qhat(s,a;\theta) \approx \max_{i\in[1,N]} \Qhat(s,a_i;\theta)$ where $a_i$ represents the $i$th centroid. Because computing $\max_{i\in[1,N]} \Qhat(s,a_i;\theta)$ is easy in an RBVF, this result provides tremendous leverage.

The aim of our experimental results are twofold. We first endow DQN, a standard deep RL algorithm originally proposed for discrete actions~\citep{DQN}, with a deep RBVF and produce a new continuous-control algorithm called RBF-DQN. We evaluate RBF-DQN to demonstrate its superior performance relative to value-function-only baselines, and to show its competitiveness with state-of-the-art actor-critic deep RL. We also show that a deep RBVF could serve as the critic in standard actor-critic algorithms such as DDPG~\citep{dpg,ddpg}. This enables us to approximate the greedy value function via the critic, and ultimately improve the performance of DDPG.
\section{Background}
We study the interaction between an environment and an agent that seeks to maximize reward~\citep{RL_book}, a problem typically formulated using Markov Decision Processes (MDPs)~\citep{puterman_mdp}. An MDP is specified by a tuple: $\langle \fancyS, \fancyA, T, R, \gamma \rangle$. In this work, $\fancyS$ and $\fancyA$ denote the continuous state space and the continuous action space of the MDP, and we further assume that they are closed and bounded, therefore compact. The goal of an RL agent is to find a policy $\pi:\fancyS\rightarrow\fancyA$ that collects high sums of discounted rewards across timesteps.

For a state $s\in\fancyS$, action $a\in\fancyA$, and a policy $\pi$, we define the state--action value function:
$$Q^{\pi}(s,a):= \expectation{\pi}\big[G_t\mid s_t=s,a_t=a\big]\ , $$
where $G_t:=\sum_{i=t}^{\infty}\gamma^{i-t}R_{i}$ is called the \emph{return} at timestep $t$. The state--action value function of an \textit{optimal} policy, denoted by $Q^{*}(s,a)$, can be written recursively \citep{bellman}:
\begin{equation*}
    Q^{*}(s,a)=R(s,a)+\gamma\int_{s'}\!T(s,a,s')\max_{a'}Q^{*}(s',a')ds'.
    \label{eq:Bellman}
\end{equation*}

In the absence of a model, a class of RL algorithms solve for the fixed point of Bellman's equation using environmental interactions. Q-learning \citep{Q_learning}, a notable example of these so-called model-free algorithms, learns an approximation of $Q^*$, denoted by $\Qhat$ and parameterized by $\theta$. Under function approximation, $\theta$ parameters are updated as follows:
\begin{equation}
    \begin{split}
        &\theta\leftarrow\theta+\alpha\ \delta \ \nabla_\theta \Qhat(s,a;\theta)\ ,\\
        & \textrm{where}\quad \delta := r+\gamma \max_{a'\in\fancyA} \Qhat(s',a';\theta)-\Qhat(s,a;\theta)\ ,
    \end{split}
    \label{eq:q_learning}
\end{equation}
using tuples of experience $\langle s,a,r,s'\rangle$ collected during environmental interactions. Note that Q-learning's update rule (\ref{eq:q_learning}) is agnostic to the choice of function class, and so in principle any differentiable and parameterized function class could be used in conjunction with the above update to learn $\theta$ parameters. For example, \citet{sutton_linear} used linear function approximation, \citet{konidaris_fourier} used Fourier basis functions, and \citet{DQN} used convolutional neural networks. Finally, we assume that $\Qhat$ is continuous.

\section{Deep Radial-Basis Value Functions}
\label{sec:Deep_RBF_Value_Functions}
Deep Radial-Basis Value Functions (RBVFs) combine the practical advantages of deep networks \citep{deep_learning_book} with the  theoretical advantages of radial-basis functions (RBFs) \citep{rbf_survey}. A deep RBVF is comprised of a number of arbitrary hidden layers, followed by an \textit{RBF output layer}, defined next. The RBF output layer, first introduced in a seminal paper by \citet{rbf}, is sometimes used as a standalone single-layer function approximator, referred to as a (shallow) RBF network. It is also a core ingredient of the kernel trick in Support Vector Machines \citep{cortes1995support}. We use an RBF network as the final, or output, layer of a deep network.

For a given input $a$, the RBF layer $f(a)$ is defined as:
\begin{equation}
    f(a):=\sum_{i=1}^{N} g(a-a_i)\ v_i \ , \label{eq:RBF_interpolation}
\end{equation}
where each $a_i$ represents a \emph{centroid} location, $v_i$ is the value of the centroid $a_i$, $N$ is the number of centroids, and $g$ is an RBF. A commonly used RBF is the negative exponential:
\begin{equation}
    g(a-a_i):=e^{-\beta\norm{a-a_i}}\ , \label{eq:negative_exponential_RBF}
\end{equation}
equipped with an inverse smoothing parameter $\beta\!\geq\!0$. (See \citet{reformulated_rbf} for a thorough treatment of other RBFs.) Formulation (\ref{eq:RBF_interpolation}) could be thought of as an interpolation based on the value and the weights of all centroids, where the weight of each centroid is determined by its proximity to the input. Proximity here is quantified by the RBF $g$, in this case the negative exponential (\ref{eq:negative_exponential_RBF}).

It is theoretically useful to normalize centroid weights to ensure that they sum to 1 so that $f$ implements a weighted average. This weighted average is sometimes referred to as a normalized RBF layer~\citep{moody1989fast,normalized_RBF}:
\begin{equation}
    f_{\beta}(a):=\frac{\sum_{i=1}^{N}e^{-\beta\norm{a-a_i}}\ v_i}{\sum_{i=1}^{N}e^{-\beta\norm{a-a_i}}}\ .
    \label{normalized_RBF_formulation}
\end{equation}
As the inverse smoothing parameter $\beta\!\rightarrow\!\infty$, the function implements a winner-take-all case where the value of the function at a given input is determined only by the value of the closest centroid location, nearest-neighbor style. This limiting case is sometimes referred to as a \emph{Voronoi decomposition} \citep{voronoi}. Conversely, $f$ converges to the mean of centroid values regardless of the input $a$ as $\beta$ gets close to 0; that is, $\forall a \ \lim_{\beta\rightarrow 0} f_{\beta}(a)~=~\frac{\sum_{i=1}^{N} v_i}{N}$. Since an RBF layer is differentiable, it could be used in conjunction with gradient-based optimization techniques and Backprop to learn the centroid locations and their values by optimizing for a loss function. Note that formulation (\ref{normalized_RBF_formulation}) is different than the Boltzmann softmax operator \citep{mellowmax,song2019revisiting}, where the weights are determined not by an RBF, but by the action values.

Finally, to represent the $Q$ function for RL, we use the following formulation:
\begin{equation}
    \Qhat_{\beta}(s,a;\theta):=\frac{\sum_{i=1}^{N}e^{-\beta\norm{a-a_i(s;\theta)}}\ v_i(s;\theta)}{\sum_{i=1}^{N}e^{-\beta\norm{a-a_i(s;\theta)}}}\ .
    \label{eq:deep_RBF_value_function}
\end{equation}
From equation (\ref{eq:deep_RBF_value_function})
a deep RBVF learns two mappings: state-dependent centroid locations $a_i(s;\theta)$ and state-dependent centroid values $v_i(s;\theta)$. The role of the output layer is to compute the output of the entire deep RBVF.  We illustrate the architecture of a deep RBVF in Figure~\ref{fig:RBF_network}. In the experimental section, we demonstrate how to learn parameters $\theta$.

\begin{figure}
    \centering
    \includegraphics[width=\linewidth]{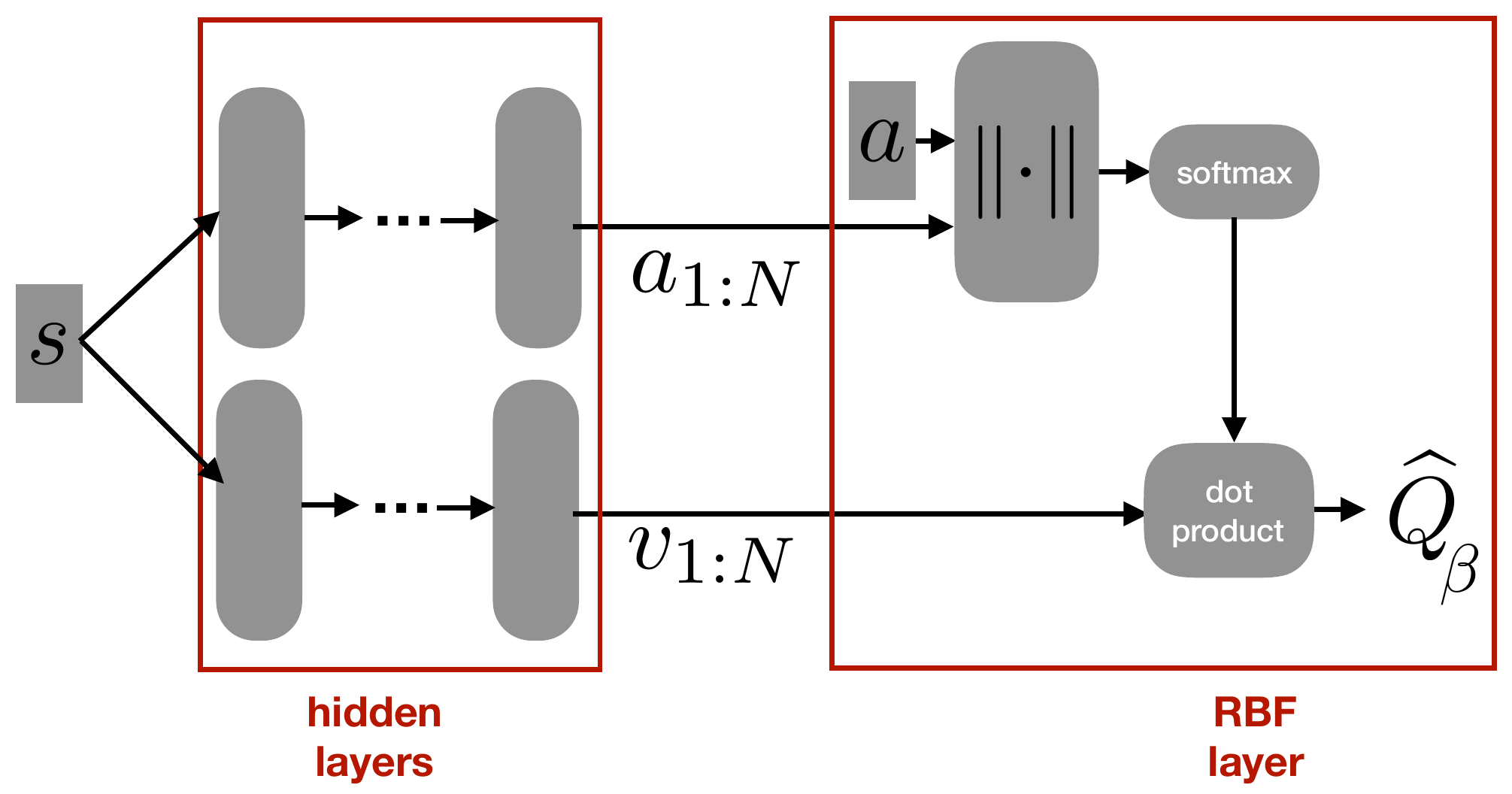}
    \caption{Architecture of a deep RBVF, which could be thought of as an RBF output layer added to an otherwise standard deep $Q$ function. All operations of the final RBF layer are differentiable,  so the parameters of hidden layers $\theta$, which represent the mappings $a_i(s;\theta)$ and $v_i(s;\theta)$, can be learned using gradient-based optimization techniques.}
    \label{fig:RBF_network}
\end{figure}
We now show that deep RBVFs have a highly desirable property for value-function-based RL, namely that they enable easy action maximization. 

First, it is easy to find the output of a deep RBVF at each centroid location $a_i$, that is, to compute $\Qhat_{\beta}(s,a_i;\theta)$.
Note that $\Qhat_{\beta}(s,a_i;\theta)\neq v_i(s;\theta)$ in general for a finite $\beta$, because the other centroids $a_j\ \forall j \in \{1,..,N\}-{i}$ may have non-zero weights at $a_i$. To compute $\Qhat_{\beta}(s,a_i;\theta)$, we access the centroid location using $a_i(s;\theta)$, then input $a_i$ to get $\Qhat(s,a_i;\theta)\ $. Once we have $\Qhat(s,a_i;\theta)\ \forall a_i$, we can trivially find the best centroid :$\ \max_{i\in[1,N]} \Qhat_{\beta}(s,a_i;\theta) \ .$

Recall that our goal is to compute $\max_{a\in\fancyA} \Qhat_{\beta}(s,a;\theta)$, but so far we have shown how to compute $\max_{i\in[1,N]} \Qhat_{\beta}(s,a_i;\theta)$. We now show that these two quantities are equivalent in one-dimensional action spaces. More importantly, Theorem~\ref{theorem:maxGap} shows that with arbitrary number of dimensions, there may be a gap, but that this gap gets exponentially small with increasing the inverse smoothing parameter $\beta$. Proofs are in the Appendix.
\begin{restatable}{theorem}{maxGap}
Let $\Qhat_{\beta}$ be a normalized negative-exponential RBVF.
\begin{enumerate}
    \item{ For $\mathcal{A} \in \mathcal{R}:$\\
    $\quad \max_{a\in\fancyA} \Qhat_{\beta}(s,a;\theta)=\max_{i\in[1,N]} \Qhat_{\beta}(s,a_i;\theta)\ .$}
    \item{ For $\fancyA \in \fancyR^d\quad \forall d>1 :$\\
    $\max_{a\in\fancyA}\! \Qhat_{\beta}(s,a;\theta)-\!\max_{i\in[1,N]}\!\Qhat_{\beta}(s,a_i;\theta)\!\leq\!\mathcal{O}(e^{-\beta}) \ .$}
\end{enumerate}
\label{theorem:maxGap}
\end{restatable}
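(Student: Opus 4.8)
The plan is to handle the two parts with different tools: Part 1 is an exact identity driven by a monotonicity phenomenon special to a one-dimensional action line, whereas Part 2 is a soft, dimension-free estimate that uses only the fact that a normalized RBF layer outputs a convex combination of the centroid values. Note that a naive ``set the gradient to zero'' argument is unavailable, since $\Qhat_{\beta}(s,\cdot;\theta)$ is non-differentiable at each centroid because of the $\norm{\cdot}$ terms.

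For Part 1 I would sort the centroids $a_1<a_2<\cdots<a_N$ (assuming they are distinct, as holds generically, and lie in $\fancyA$, as the architecture guarantees), and partition $\fancyA$ into the ray left of $a_1$, the ray right of $a_N$, and the ``slabs'' $[a_k,a_{k+1}]$. The crux is a reduction: on the interior of a slab, each centroid with $a_j\le a_k$ contributes a weight $e^{-\beta\norm{a-a_j}}=e^{\beta a_j}e^{-\beta a}$ and each centroid with $a_j\ge a_{k+1}$ contributes $e^{-\beta a_j}e^{\beta a}$, so $\Qhat_{\beta}(s,a;\theta)$ collapses to a convex combination of just two numbers---the $e^{\beta a_j}$-weighted mean of the left values and the $e^{-\beta a_j}$-weighted mean of the right values---whose mixing ratio is proportional to $e^{2\beta a}$ and hence strictly monotone in $a$. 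A convex combination of two fixed numbers with a monotone mixing ratio is monotone, so $\Qhat_{\beta}(s,\cdot;\theta)$ is monotone across each slab and attains its slab-maximum at a centroid. On the two outer rays only one group of centroids is present, the common exponential factor cancels, and $\Qhat_{\beta}(s,\cdot;\theta)$ is constant (equal to its value at $a_1$, resp.\ at $a_N$). Stitching the pieces together---using continuity of $\Qhat_{\beta}$---yields $\max_{a\in\fancyA}\Qhat_{\beta}(s,a;\theta)=\max_{i\in[1,N]}\Qhat_{\beta}(s,a_i;\theta)$.

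For Part 2, since the normalized layer writes $\Qhat_{\beta}(s,a;\theta)=\sum_i w_i(a)v_i$ with $w_i(a)\ge0$ and $\sum_i w_i(a)=1$, one gets $\max_{a\in\fancyA}\Qhat_{\beta}(s,a;\theta)\le\max_i v_i$. Let $v_m=\max_i v_i$; evaluating $\Qhat_{\beta}$ at the corresponding centroid and isolating the $i=m$ term gives $v_m-\Qhat_{\beta}(s,a_m;\theta)=\frac{\sum_{j\ne m}e^{-\beta\norm{a_m-a_j}}(v_m-v_j)}{1+\sum_{j\ne m}e^{-\beta\norm{a_m-a_j}}}\le(\max_i v_i-\min_i v_i)\,(N-1)\,e^{-\beta d_{\min}}$, where $d_{\min}=\min_{i\ne j}\norm{a_i-a_j}>0$. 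Hence $\max_{a\in\fancyA}\Qhat_{\beta}(s,a;\theta)-\max_{i\in[1,N]}\Qhat_{\beta}(s,a_i;\theta)\le v_m-\Qhat_{\beta}(s,a_m;\theta)$ decays exponentially in $\beta$, i.e.\ it is $\fancyO(e^{-\beta})$ for a fixed centroid configuration. This argument holds for every $d\ge1$; it is the slab-monotonicity of Part 1, which has no higher-dimensional analogue, that upgrades the inequality to an equality there.

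The step I expect to be the main obstacle is the slab reduction behind Part 1: checking that the $N$-term normalized sum really does collapse to a two-point convex combination with a monotone mixing ratio, and ruling out a maximum that ``escapes'' to an interior point of a slab or to $\partial\fancyA$---the ``constant beyond the extreme centroids'' observation, together with the centroids-in-$\fancyA$ assumption, is what closes that loophole. For Part 2 the only delicate issue is interpretive: the constant hidden in $\fancyO(e^{-\beta})$ depends on the minimum inter-centroid distance and the spread of the $v_i$, so the bound should be read as ``exponentially small in $\beta$ for a fixed $(s,\theta)$''.
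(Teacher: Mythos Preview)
Your proposal is correct and follows essentially the same route as the paper: both proofs use the slab decomposition with the left/right weight-collapse for Part~1, the ``constant beyond the extreme centroids'' observation for the outer rays, and the same $v_m-\Qhat_{\beta}(s,a_m;\theta)$ upper bound for Part~2. The only difference is cosmetic---where the paper casts the slab problem as a two-variable linear program and invokes the extreme-point theorem to place the slab-maximum at $a_L$ or $a_R$, you argue directly from monotonicity of the mixing ratio $e^{2\beta a}$, which is a slightly more elementary path to the same conclusion.
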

\begin{figure}
\begin{subfigure}{.2\textwidth}
  \centering
  \includegraphics[width=.975\linewidth]{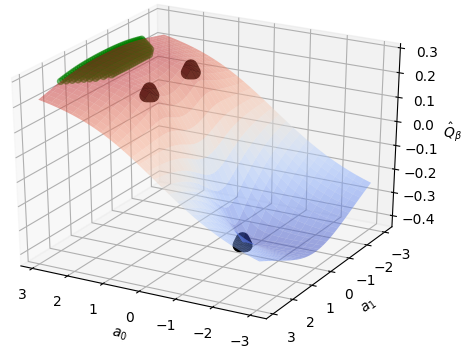}
  \caption{$\beta=0.25$}
\end{subfigure}
\begin{subfigure}{.2\textwidth}
  \centering
  \includegraphics[width=.975\linewidth]{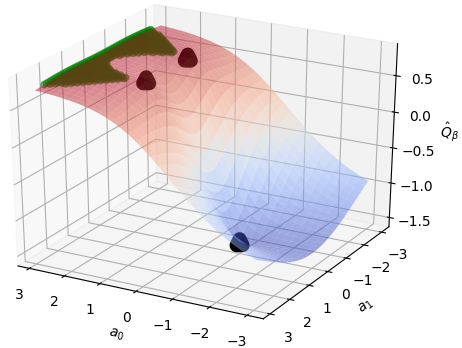}
  \caption{$\beta=1$}
\end{subfigure}
\begin{subfigure}{.2\textwidth}
  \centering
  \includegraphics[width=.975\linewidth]{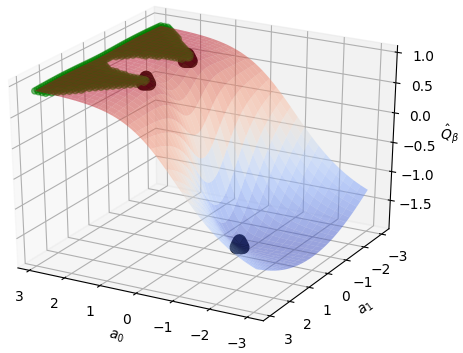}
  \caption{$\beta=1.5$}
\end{subfigure}\hspace{12mm}
\begin{subfigure}{.2\textwidth}
  \centering
  \includegraphics[width=.975\linewidth]{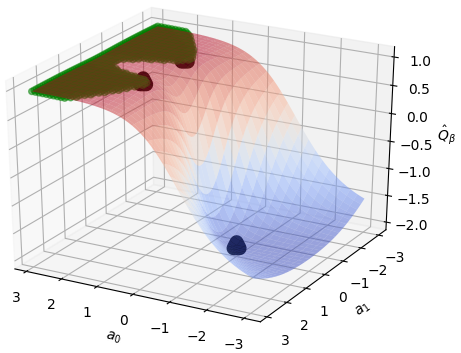}
  \caption{$\beta=2$}
\end{subfigure}
\caption{The output of a RBVF with 3 fixed centroid locations and values, but different settings of the inverse smoothing parameter $\beta$ on a 2-dimensional action space. The regions in dark green highlight the set of actions $a$ for which $a$ is extremely close to the global maximum, $\max_{a\in\fancyA} \Qhat_{\beta}(s,a;\theta)$. Observe the reduction of the gap between $\max_{a\in\fancyA} \Qhat_{\beta}(s,a;\theta)$ and $\max_{i\in[1,N]} \Qhat_{\beta}(s,a_i;\theta)$ by increasing $\beta$, as guaranteed by Theorem~\ref{theorem:maxGap}.}
\label{fig:effect_of_beta}
\end{figure}

In light of Theorem~\ref{theorem:maxGap}, to approximate $\max_{a\in\fancyA} \Qhat(s,a;\theta)$ we simply compute $\max_{i\in[1,N]} \Qhat(s,a_i;\theta)$. The accuracy of the approximation hinges on the magnitude of the inverse smoothing parameter. Notice that using extremely high values of $\beta$ may lead to poor generalization error due to the complexity of the resultant function class. Therefore, the theorem should be used as a guarantee on the approximation error, not as a justification for using extremely large $\beta$ values. Notice also that this result holds for normalized negative-exponential RBVFs, but not necessarily for the unnormalized case or for RBFs other than negative exponential.

In Theorem~2, presented in the Appendix, we show that RBVFs support universal function approximation. Collectively, Theorems~\ref{theorem:maxGap} and~2 guarantee that deep RBVFs ensure accurate and efficient action maximization without impeding universal function approximation. This combination of properties stands in contrast with prior work that used function classes that enable easy action maximization but lack UFA~\citep{naf,icnn}, as well as prior work that preserved the UFA property but did not guarantee arbitrarily high accuracy when performing the maximization step \citep{actor_expert,caql}.

In terms of scalability, note that the RBVF formulation scales naturally owing to its freedom to determine centroids that best minimize the loss function. As a thought experiment, suppose that some region of the action space has a high value, so an agent with greedy action selection frequently chooses actions from that region. A deep RBVF would then move more centroids to the region, because the region heavily contributes to the loss function. Since the centroids are state-dependent, the network can learn to move the centroids to more rewarding regions on a per-state basis. It is unnecessary to initialize centorid locations carefully, or to uniformly cover the action space \emph{a priori}. In this sense, learning RBVFs could be thought of as a form of adaptive and soft discretization learned by gradient descent. Adaptive discretization techniques have proven fruitful in terms of sample-complexity guarantees in bandit problems~\cite{zooming}, as well as in RL~\cite{zoom_rl0,zoom_rl,kernel_based_RL_regret}.
\section{Experiments}
In this section, we empirically assess the effectiveness of RBVFs in the context of non-linear regression, value-function-only deep RL, and actor-critic deep RL.
\subsection{RBVFs for Regression}
To demonstrate the operation of an RBF network in the clearest setting, we start with a regression task where the agent is provided with sample input--output pairs $\langle a,r\rangle$. We use the function: $r(a)=\norm{a}_2\frac{\sin(a_0)+\sin(a_1)}{2}$, whose surface is presented in Figure~\ref{fig:xsinx_problem} (left). It is non-convex and includes several local maxima (and minima). We sampled 500 inputs $a$ uniformly randomly from $[-3,3]^2$ to obtain a dataset for training. We used the mean-squared-error loss, and used $N=20$. The surface is learned well (Figure~\ref{fig:xsinx_problem}). 
\begin{figure}
  \centering
\begin{subfigure}{.25\textwidth}
  \includegraphics[width=.975\linewidth]{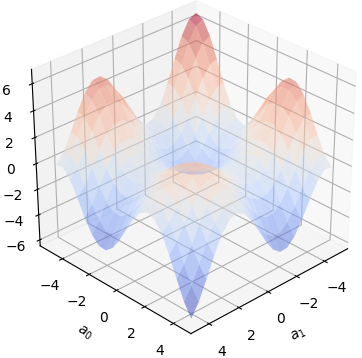}
\end{subfigure}%
\begin{subfigure}{.25\textwidth}
  \centering
  \includegraphics[width=.975\linewidth]{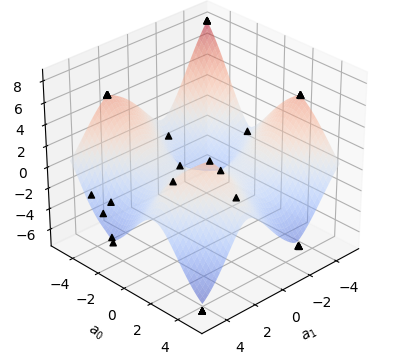}
\end{subfigure}
\caption{Left: True surface of the function. Right: The accurate approximation learned by the RBVF. Black dots represent the centroids. Note also that $\max_{i\in[1,N]} \widehat r_{\beta}(a_i;\theta)$ is an accurate approximation for $\max_{a\in\fancyA} r(a;\theta)$.}
\label{fig:xsinx_problem}
\end{figure}
\subsection{RBVFs for Value-Function-Only Deep RL}
We now use deep RBVFs for solving continuous-action RL problems. To this end, we learn a deep RBVF using a learning algorithm akin to that of DQN \citep{DQN}, but extended to the continuous-action case. DQN uses the following loss function for learning the value function: 
$$
    L(\theta):=\expectation{s,a,r,s'}\Big[\big(r+\gamma\max_{a'\in\fancyA}\Qhat(s',a';\theta^-)-\Qhat(s,a;\theta)\big)^2\Big].
    \label{eq:dqn_loss_function}$$
DQN adds tuples of experience $\langle s,a,r,s'\rangle$ to a buffer, and later samples a minibatch of tuples to compute $\nabla_{\theta} L(\theta)$. DQN maintains a second network parameterized by weights $\theta^{-}$. This second network, denoted $\Qhat(\cdot,\cdot,\theta^-)$ and referred to as the \emph{target network}, is periodically synchronized with the online network $\Qhat(\cdot,\cdot,\theta)$. RBF-DQN uses the same loss function, but modifies the function class of DQN. Concretely, DQN learns a deep network with one output per action, exploiting the discrete and finite nature of the action space.  By contrast, RBF-DQN takes a state and an action vector, and outputs a single scalar using a deep RBVF. The pseudo-code for RBF-DQN is presented in Algorithm 1.

\begin{figure*}[t]
  \centering
  \includegraphics[width=1\textwidth]{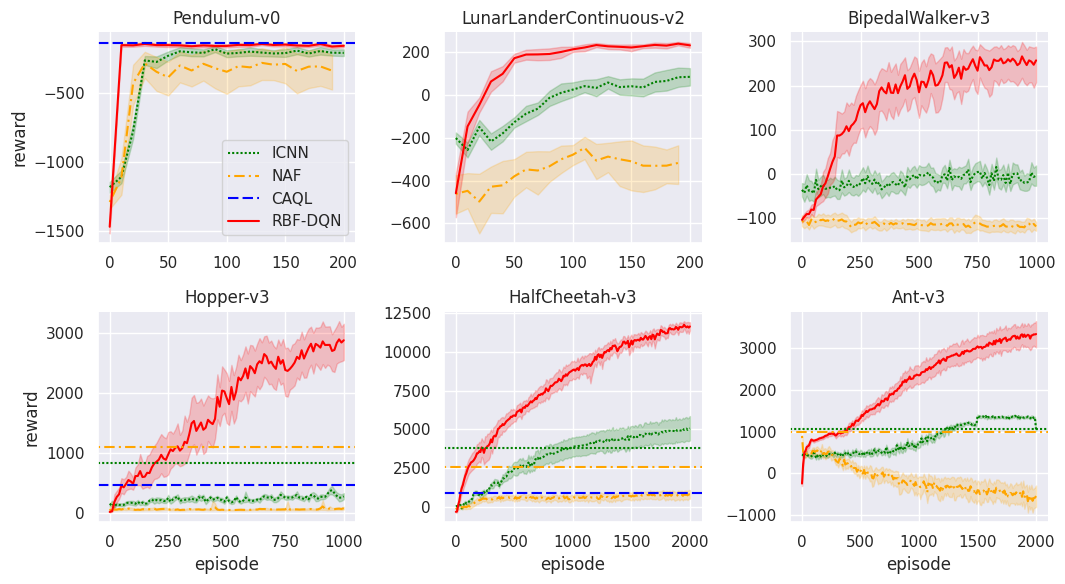}
    \caption{A comparison between RBF-DQN and value-function-only deep RL baselines on 6 Open AI Gym environments. For ICNN and NAF, we have included two results: Dashed lines indicate the level of performance reported by \citet{icnn}, and the learning curves show the performance of the baselines obtained by running publicly-available implementations. All runs for this figure and all following figures are averaged over 20 trials with different random seeds.}
  \label{fig:RL_results}
\end{figure*}

\begin{algorithm}
\begin{algorithmic}
\State Initialize deep RBVF with $N, \beta, \theta$
\State Initialize replay buffer $\mathcal{B}, \epsilon, \gamma,\alpha,\alpha^{-},\theta^{-}$
\For{$E\ \textrm{episodes}$}
\State Initialize $s$
    \While{\textrm{not done}}
    \State $a\leftarrow\epsilon\textrm{-greedy}\big(\Qhat_{\beta}(s,\cdot;\theta),\epsilon\big)$
    \State $s',r,\textrm{done}\leftarrow\textrm{env.step}(s,a)$
    \State add $\langle s,a,r,s',\textrm{done} \rangle$ to $\mathcal{B}$; $s\leftarrow s'$
    \EndWhile
    \For{$M$ minibatches sampled from $\mathcal{B}$}
        \For {$\langle s,a,r,s',\textrm{done}\rangle$ in minibatch}
        \State $\Delta=\big(r-\Qhat_{\beta}(s,a;\theta)\big)\nabla_{\theta}\Qhat(s,a;\theta)$
        \If{not done}
        \State \textrm{get centroids} $a_i(s';\theta^-), i \in [1,N]$
        \State $\Delta\!\mathrel{+}=\!\gamma\max_{i}\Qhat_{\beta}(s',a_i\big(s';\theta^-);\theta^-\big)$
        \EndIf
        \State $\theta \leftarrow \theta + \alpha\Delta\cdot\ \nabla_{\theta}\Qhat_{\beta}(s,a;\theta)$
        \EndFor
        \State $\theta^{-} \leftarrow (1-\alpha^{-})\theta^{-} + \alpha^{-}\theta$
    \EndFor
\EndFor
\end{algorithmic}
\caption{Pseudo-code for RBF-DQN}\label{alg:RBFDQN}
\end{algorithm}

We now evaluate RBF-DQN against state-of-the-art value-function-only deep RL baselines. We understand NAF~\cite{naf} and ICNN~\cite{icnn} as two of the best continuous-action extensions of DQN in the RL literature, so we used them as our baselines. For RBF-DQN, as well as each of the two baselines, we performed 1000 updates per episodes when each episode ends. The authors of ICNN have released an official code base \footnote{github.com/locuslab/icnn}, which we used to obtain the ICNN learning curves. To the best of our knowledge, \citet{naf} did not release code, but we have used a public implementation of NAF \footnote{github.com/ikostrikov/pytorch-ddpg-naf}. Compared to the reported results in the ICNN paper, we were able to roughly achieve the same performance for NAF and ICNN. However, for completeness, we show, via dashed horizontal lines, levels of performance for NAF and ICNN reported by the ICNN paper whenever one was present. Another relevant baseline is CAQL~\cite{caql}, for which we could neither find the authors' code nor a public implementation. Since the original CAQL paper used environments with constrained action spaces, we only compare with the best reported results for Hopper-v3, HalfCheetah-v3, and Pendulum-v0, as these environments were used with the full action space. We have released our code: \footnote{github.com/kavosh8/RBFDQN\_pytorch}. Details of our hyper-parameter tuning can be found in the Appendix. RBF-DQN is clearly outperforming the two baselines, even when considering the results reported by the ICNN authors~\citep{icnn}.

\label{sec:RL_experiments}
\begin{figure*}[t]
  \centering
  \includegraphics[width=1\textwidth]{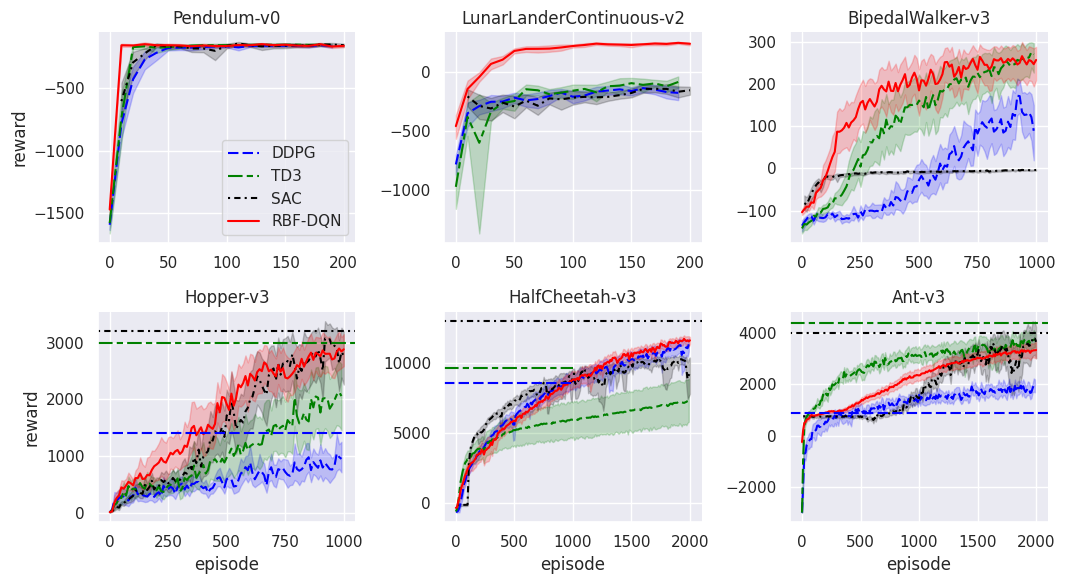}
    \caption{A comparison between RBF-DQN and state-of-the-art actor-critic deep RL.}
  \label{fig:RL_results_comparison_with_ac}
\end{figure*}

In light of the above results, we can claim that RBF-DQN is demonstrably a state-of-the-art value-function-only deep RL algorithm, but how does RBF-DQN compare to state-of-the-art actor-critic deep RL? To answer this question, we compared RBF-DQN to state-of-the-art actor critic deep RL, namely DDPG~\cite{dpg, ddpg}, TD3~\cite{TD3}, and SAC~\cite{soft_ac}. For TD3 and DDPG, we used the official code released by~\citet{TD3}\footnote{github.com/sfujim/TD3}, which is also used by numerous other papers. For SAC, we used a public implementation.\footnote{github.com/pranz24/pytorch-soft-actor-critic} In Figure~\ref{fig:RL_results_comparison_with_ac}, we show, not just the learning curves we obtained by running the implementations, but also results reported in the TD3 and SAC papers.

We note that the authors of TD3, DDPG, and SAC report their results in steps. In our experiments, we report results in episodes and have modified all implementations to have 1000 gradient updates per episode, while continuing to record steps. We plot horizontal lines at the level of reward obtained by the authors (and reported in their papers) at the average steps reached by the publicly available implementations when run for the specified number of episodes. For example, in Ant-v3, SAC reaches 1,076,890 steps on average when run for 2000 episodes. Therefore, we estimate the reward from the authors' graph at around $1.07 \times 10^{6}$ steps to be roughly 4000. More details for all relevant domains are available in the Appendix.

From Figure~\ref{fig:RL_results_comparison_with_ac}, RBF-DQN is performing better than or is competitive with state-of-the-art actor critic deep RL baselines. RBF-DQN can compete with these baselines despite the fact that it only uses 2 neural networks (an online value function and a target value function), while SAC and TD3, for instance, use 5 and 6 networks, respectively. In our experiments, TD3 and SAC perform 1000 gradient updates on two critics and one actor per each episode, while RBF-DQN only updates a deep RBVF 1000 times. Therefore, TD3 and SAC are also performing 3 times as many updates as RBF-DQN. Lastly, some of the ideas leveraged by TD3 and SAC, such as value clipping in TD3 and SAC, and entropy regularization in SAC, can be integrated into RBF-DQN. We believe these combinations are promising, and leave the investigation of these combinations for future work.

\subsection{Hyper-parameter Investigation}
\begin{figure}[h!]
  \centering
  \includegraphics[width=1\linewidth]{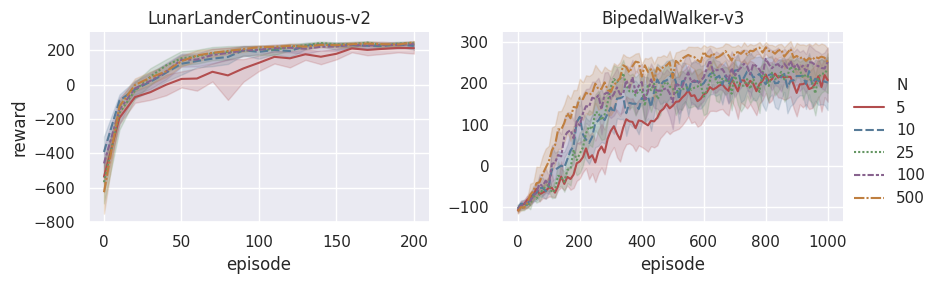}
    \caption{RBF-DQN performance as a function of the number of centroids $N$.}
  \label{fig:centroid_variation}
\end{figure}
\begin{figure}[h]
  \centering
  \includegraphics[width=1\linewidth]{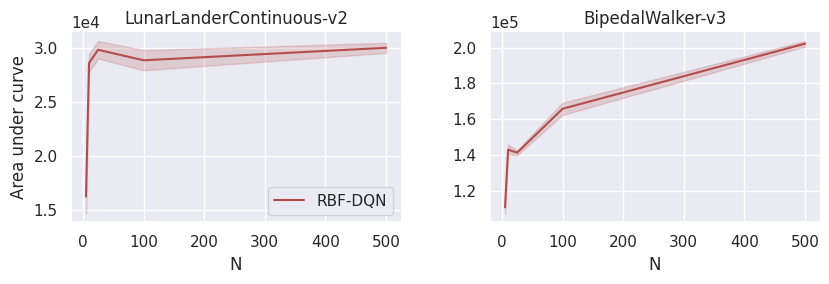}
    \caption{RBF-DQN performance (area under curve) as a function of the number of centroids $N$.}
  \label{fig:centroid_variation_auc}
\end{figure}

\begin{figure}[h]
  \centering
  \includegraphics[width=1\linewidth]{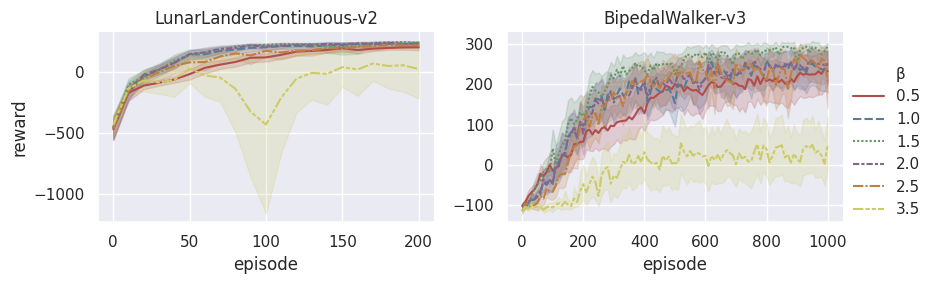}
    \caption{RBF-DQN performance as a function of the inverse smoothing parameter $\beta$.}
  \label{fig:beta_variation}
\end{figure}
\begin{figure}[h!]
  \centering
  \includegraphics[width=1\linewidth]{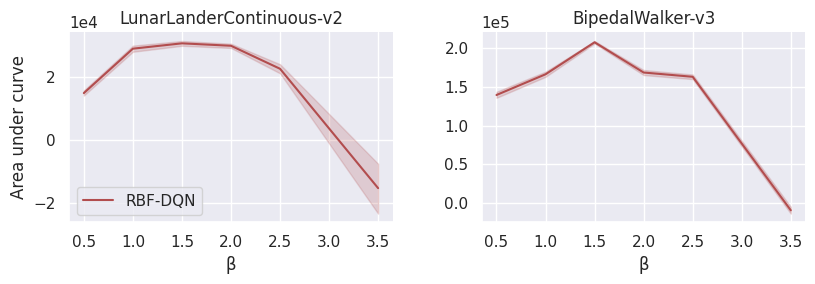}
    \caption{RBF-DQN performance (area under curve) as a function of the inverse smoothing parameter $\beta$.}
  \label{fig:beta_variation_auc}
\end{figure}

We introduced two new hyper-parameters for RBVFs, namely the number of centroids $N$, and the inverse smoothing parameter $\beta$. To better understand the impact of these hyper-parameters, we investigate the performance of the agent as a function of these hyper-parameters. We first show the performance of RBF-DQN as a function of $N$ (Figure~\ref{fig:centroid_variation} and Figure~\ref{fig:centroid_variation_auc}). We note two interesting observations. First, the agent can perform reasonably well even with a very small number of centroids such as $N=5$. Second, we see improved performance as we increase the number of centroids. The first observation suggests that we can get good results even when massive computation is not available and we have to use very small $N$. The second property is also appealing, because, in the presence of enough compute power, we can improve the agent's performance by simply increasing $N$.

In contrast, with respect to $\beta$, the inverse smoothing parameter of the negative exponential, we often see an inverted-U shape, with the best setting at some intermediate value (see Figure~\ref{fig:beta_variation} and Figure~\ref{fig:beta_variation_auc}). We note that, while using a large value of $\beta$ makes it theoretically possible to approximate any function up to any desired accuracy, the trade-off between using small $\beta$ values that make unwarranted generalization, and large $\beta$ values that yield extremely local approximations make intermediate values of $\beta$ work best, as demonstrated.
\subsection{RBVFs for Actor-Critic Deep RL}
So far, we have applied RBVFs to value-function-only RL, but can RBVFs be useful for other RL algorithms? To answer this question affirmatively, we now use RBVFs for actor-critic deep RL, in particular as the critic in the DDPG algorithm~\cite{dpg,ddpg}. Recall that, in contrast with value-function-only RL, actor-critic algorithms learn two separate networks: a value function (or the critic), and a policy (or the actor) that is mainly used for action selection. \citet{dpg} introduces the deterministic policy-gradient actor critic, but makes the observation that, in continuous control, computing the greedy action with respect to the learned critic is not tractable (see their Subsection 3.1), leading them to instead use a SARSA update for the critic. We showed that a deep RBVF approximately solves this maximization problem tractably. Leveraging this insight, we can modify the DDPG algorithm to use an RBVF critic. Given a tuple $\langle s,a,r,s' \rangle$, we can learn, via the critic, the value function with a Q-learning or SARSA update:
\begin{equation}
    \begin{split}
        &\theta\leftarrow\theta+\alpha\ \delta_{\textrm{RBF-DDPG}} \ \nabla_\theta \Qhat(s,a;\theta)\ ,\\
        &\quad \delta_{\textrm{Q-learning}} := r+\gamma \max_{i}\Qhat(s',a_i(s');\theta) -\Qhat(s,a;\theta)\\
        & \quad \delta_{\textrm{SARSA}} := r+\gamma
        \Qhat(s',\pi(s';\omega);\theta) -\Qhat(s,a;\theta)\ .
    \end{split}
    \label{eq:rbf_ddpg}
\end{equation}
\begin{figure}[h!]
  \centering
  \includegraphics[width=0.7\linewidth]{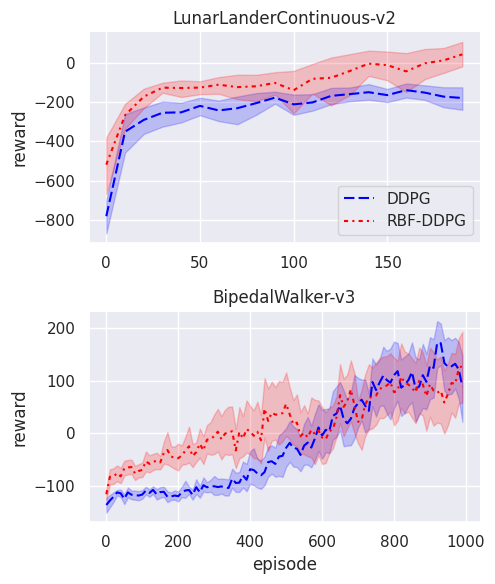}
    \caption{A comparison between DDPG and RBF-DDPG.}
  \label{fig:rbf_ddpg}
\end{figure}
The original DDPG algorithm used the SARSA update, obviating the action-maximization step. Using a deep RBVF, RBF-DDPG performs the Q-learning update shown above to update the critic. RBF-DDPG is otherwise analogous to DDPG.

We compare RBF-DDPG and DDPG in Figure~\ref{fig:rbf_ddpg}. It is clear that the use of RBVFs benefits DDPG. Although preliminary evaluations of RBF-DDPG do not exceed state-of-the-art performance, further investigation is required to see if the addition of other algorithmic ideas, such as those presented in \citet{TD3} and \citet{soft_ac}, can further improve the performance of RBF-DDPG. 
\section{Conclusion}
We introduced deep radial-basis value functions (RBVFs), and showed that they enable us to extend the Q-learning update from discrete-action settings to continuous control. Leaning on this insight, we introduced two new RL algorithms, RBF-DQN and RBF-DDPG. We showed that, in particular, RBF-DQN is significantly better than value-function-only deep RL baselines. RBF-DQN is also competitive with state-of-the-art actor-critic deep RL despite maintaining fewer networks and performing fewer updates. RBVFs facilitate action maximization, do not impede universal function approximation, and scale to large action spaces. Deep RBVFs are thus an appealing choice for value function approximation in continuous control.

\section{Future Work}
We envision several promising directions for future work. First, the RL literature has numerous examples of algorithmic ideas that help improve value-function-only algorithms (see \citet{rainbow} for some examples). These ideas are usually proposed for domains with discrete actions, so extending them to continuous-action domains using RBVFs could be an exciting direction to pursue. Our results set the stage to exploring these extensions.

We solely focused on deep RBVFs with negative exponentials, but various RBFs exist in the literature~\citep{reformulated_rbf}. We also envision future work where we learn an abstract action representation, akin to \citet{action_embedding_yash}, and utilize RBFs that operate in the abstract action space. This can better leverage the underlying structure of the action space.

Different exploration strategies exist for value-function-only RL: optimistic initialization \citep{RL_book,machado2015domain}, softmax policies \citep{rummery1994line,RL_book}, uncertainty-based exploration \citep{osband2016deep}, and Zooming \citep{zooming}. A combination of advanced exploration strategies with deep RBVFs could prove more effective than dithering strategies used in this paper.
\section{Acknowledgements}
We are appreciative of the feedback provided by many of our colleagues. In particular, we thank Dilip Arumugam for his comments in the early steps of the work, as well as Sam Lobel and Seungchan Kim for our invaluable discussions.
\bibliography{bibfile}
\onecolumn
\section{Appendix}
\subsection{Proofs}
\maxGap*
\begin{proof}
We begin by proving the first result. For an arbitrary action $a$, we can write $$\Qhat_{\beta}(s,a;\theta)=w_1 v_1(s;\theta) + ... + w_N v_N(s;\theta) \ ,$$
Without loss of generality, we sort centroids so that $\forall i\in[1,N-1],\ a_i~\leq~a_{i+1}$. Take two neighboring centroids $a_{L}$ and $a_{R}$ and notice that:
$$\forall i<L, \quad \frac{w_{L}}{w_i}=\frac{e^{-|a-a_L|}}{e^{-|a-a_{i}|}}=\frac{e^{-a+a_L}}{e^{-a+a_i}}=e^{a_L-a_i}\ \myeq \frac{1}{c_i}\ \implies w_i=w_L c_i\ .$$
In the above, we used the fact that all $a_i$ are to the left of $a$ and $a_L$. Similarly, we can argue that $\forall i>R\quad W_i=W_R c_i$. Intuitively, 
for actions between $a_L$ and $a_R$, we will have a constant ratio between the weight of a centroid to the left of $a_L$, over the weight of $a_L$ itself. The same holds for the centroids to the right of $a_R$.
In light of the above result, by renaming some variables we can now write:
\begin{eqnarray*}
		\Qhat_{\beta}(s,a;\theta)&=&w_1 v_1(s;\theta) + ... + w_L v_L(s;\theta) + w_R v_R(s;\theta) + ... + w_K v_K(s;\theta)\\
		&=&w_L c_1 v_1(s;\theta) + ... + w_L v_L(s;\theta) + w_R v_R(s;\theta) + ... + w_R c_K v_K(s;\theta) \\
		&=&w_L \big(c_1 v_1(s;\theta) + ... + v_L(s;\theta)\big) + w_R \big(v_R(s;\theta) + ... + c_K v_K(s;\theta)\big)\ .
\end{eqnarray*}
Moreover, note that the weights sum to 1: $w_L (c_1 + ... + 1) + w_R (1 + ... + c_K)=1\quad ,$
and $w_L$ is at its peak when we choose $a=a_L$ and at its smallest value when we choose $a=a_R$. A converse statement is true about $w_R$. Moreover, the weights monotonically increase and decrease as we move the input $a$. We call the endpoints of the range $w_{min}$ and $w_{max}$. As such, the problem $\max_{a\in[a_L,a_R]}\ \Qhat_{\beta}(s,a;\theta)$ could be written as this linear program:
\begin{eqnarray*}
	\max_{w_L,w_R} && w_L \big(c_1 v_1(s;\theta) + ... + v_L(s;\theta)\big) + w_R \big(v_R(s;\theta) + ... + c_K v_K(s;\theta)\big)\\
	s.t. && w_L (c_1 + ... + 1) + w_R (1 + ... + c_K)=1\\
	&& w_L,w_R\geq W_{min}\\
	&& w_L,w_R\leq W_{max}
\end{eqnarray*}
A standard result in linear programming is that every linear program has an extreme point that is an optimal solution \citep{convex}. Therefore, at least one of the points $(w_L=w_{min},w_R=w_{max})$ or $(w_L=w_{max},w_R=w_{min})$ is an optimal solution. It is easy to see that there is a one-to-one mapping between $a$ and $W_L,W_R$ in light of the monotonic property. As a result, the first point corresponds to the unique value of $a=a_R(s)$, and the second corresponds to unique value of $a=a_L(s)$. 
Since no point in between two centroids can be bigger than the surrounding centroids, at least one of the centroids is a globally optimal solution in the range $[a_1(s),a_N(s)]$, that is 

$$\quad \max_{a\in\fancyA} \Qhat_{\beta}(s,a;\theta)=\max_{i\in[1,N]} \Qhat_{\beta}(s,a_i;\theta)\ .$$

To finish the proof, we can show that $\forall a\!<\!a_1\ \Qhat_{\beta}(s,a;\theta)=\Qhat_{\beta}(s,a_1;\theta)$. The proof for $\forall a>a_N\ \Qhat_{\beta}(s,a;\theta)=\Qhat_{\beta}(s,a_N;\theta)$ follows similar steps:
\begin{eqnarray*}
	\forall a<a_1\ \Qhat_{\beta}(s,a;\theta)&=&\frac{\sum_{i=1}^N e^{-\beta \abs{a-a_{i}}}v_i(s)}{\sum_{i=1}^N e^{-\beta \abs{a-a_{i}}}}\\
	&=&\frac{\sum_{i=1}^N e^{-\beta \abs{a_1-c-a_i}}v_i(s)}{\sum_{i=1}^N e^{-\beta \abs{a_1-c-a_i}}}\quad (a=a_1-c \textrm{ for some } c>0)\\
	&=&\frac{\sum_{i=1}^N e^{\beta (a_1-c-a_i)}v_i(s)}{\sum_{i=1}^N e^{\beta (a_1-c-a_i)}}\quad (a_1-c<a_1\leq a_i)\\
	&=&\frac{e^{-c}\sum_{i=1}^N e^{\beta (a_1-a_i)}v_i(s)}{e^{-c}\sum_{i=1}^N e^{\beta (a_1-a_i)}}=\frac{\sum_{i=1}^N e^{\beta (a_1-a_i)}v_i(s)}{\sum_{i=1}^N e^{\beta (a_1-a_i)}}=\Qhat_{\beta}(s,a_1;\theta)\ ,
\end{eqnarray*}
We now prove the second result of the Theorem. First,
let $i^{*}$ be the index corresponding to the centroid with the highest centroid value:
$$i^{*} =\arg\max_{i} v_{i}(s;\theta)$$
Now observe that:
	\begin{eqnarray*}
		\max_{a\in\fancyA} \Qhat_{\beta}(s,a;\theta)-\max_{i\in [1:N]} \Qhat_{\beta}(s,a_i;\theta)&\leq& v_{i^{*}}(s;\theta)-\max_{i\in [1:N]} \Qhat_{\beta}(s,a_i;\theta)\\
		&\leq& v_{i^{*}}(s;\theta)-\Qhat_{\beta}(s,a_{i^*};\theta)\ .
	\end{eqnarray*}
The first inequality holds because $\max_{a\in\fancyA} \Qhat_{\beta}(s,a;\theta)\leq v_{i^{*}}(s;\theta)$. This is true because $\Qhat_{\beta}$ is just a weighted average of all $v_{i}$ values, and so, it cannot be bigger than the largest among them. Also, $\max_{i\in [1:N]} \Qhat_{\beta}(s,a_i;\theta)\geq \Qhat_{\beta}(s,a_{i^*};\theta)$, hence the second inequality. \\
For the rest of the proof we assume, without loss of generality, that $i^{*}=1$. We also drop the dependence on $\theta$:
	\begin{eqnarray*}
	    \max_{a\in\fancyA} \Qhat_{\beta}(s,a)-\max_{i\in [1:N]} \Qhat_{\beta}(s,a_i)&=&
		v_{1}(s)-\Qhat_{\beta}(s,a_{1})\\
		&=&v_{1}(s) - \frac{\sum_{i=1}^N e^{-\beta \norm{a_{1}-a_i}}v_i(s)}{\sum_{i=1}^N e^{-\beta \norm{a_{1}-a_i}}}\\
		&=&\frac{\sum_{i=1}^N e^{-\beta \norm{a_{1}-a_i}}\big(v_1(s)-v_i(s)\big)}{\sum_{i=1}^N e^{-\beta \norm{a_{1}-a_i}}}\\
		&=&\frac{\sum_{i=2}^N e^{-\beta \norm{a_{1}-a_i}}\big(v_1(s)-v_i(s)\big)}{1+\sum_{k=2}^K e^{-\beta \norm{a_{1}-a_i}}}\\
		&\leq &\Delta\frac{\sum_{i=2}^N e^{-\beta \norm{a_{1}-a_i}}}{1+\sum_{i=2}^N e^{-\beta \norm{a_{1}-a_i}}}\quad \textrm{where} \ \Delta=v_{1}(s)-\min_{i} v_{i}(s)\\
		&\leq &\Delta\sum_{i=2}^N\frac{ e^{-\beta \norm{a_{1}-a_i}}}{1+ e^{-\beta \norm{a_{1}-a_i}}}\quad \textrm{(See ~\citet{song2019revisiting})}\\
		&\leq&\Delta\sum_{i=2}^N\frac{ 1}{1+ e^{\beta \norm{a_{1}-a_i}}}=\fancyO(e^{-\beta}).
	\end{eqnarray*}\end{proof}
	
\begin{restatable}{theorem}{UFATheorem}
\label{theorem:ufa}
 Consider any state--action value function $Q^{\pi}(s,a)$ defined on a closed action space $\fancyA$. Assume that $Q^{\pi}(s,a)$ is a continuous function. For a fixed state $s$ and for any $\epsilon>0$, there exists a deep RBF value function $\Qhat_{\beta}(s,a;\theta)$ and a setting of the smoothing parameter $\beta_0$ for which:
$$\forall a\in \fancyA\quad \forall \beta\geq \beta_0 \quad |Q^{\pi}(s,a)-\Qhat_{\beta}(s,a;\theta)|\leq\epsilon \ .$$
\end{restatable}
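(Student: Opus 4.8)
The plan is to exploit compactness of $\fancyA$ together with uniform continuity of $Q^{\pi}(s,\cdot)$, and to choose the centroids so that in the near-nearest-neighbor regime induced by a large $\beta$ the normalized RBVF essentially reads off the value of a nearby centroid. Fix the state $s$ and abbreviate $Q(a):=Q^{\pi}(s,a)$. Since $\fancyA$ is compact and $Q$ is continuous, $Q$ is uniformly continuous (Heine--Cantor) and bounded; set $V_{\max}:=\max_{a\in\fancyA}\abs{Q(a)}$. Given $\epsilon>0$, uniform continuity supplies a $\delta>0$ such that $\norm{a-a'}\leq\delta \implies \abs{Q(a)-Q(a')}\leq \epsilon/2$ for all $a,a'\in\fancyA$.

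First I would construct the approximator. By compactness (total boundedness), pick finitely many points $a_1,\dots,a_N\in\fancyA$ whose radius-$\delta/2$ balls cover $\fancyA$; these are the centroid locations, and I set the centroid values to $v_i:=Q(a_i)$. The behavior of the RBF output layer at the fixed input $s$ depends only on the vectors $\big(a_i(s;\theta)\big)_i$ and $\big(v_i(s;\theta)\big)_i$, and a deep network can realize any prescribed finite collection of outputs at a single input, so a deep RBVF with these parameters exists (the depth is immaterial here). With this choice every $a\in\fancyA$ has a centroid within distance $\delta/2$, so $r(a):=\min_i\norm{a-a_i}\leq\delta/2$.

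Next, for each fixed $a$ I would split the centroids into a ``near'' set $\{i:\norm{a-a_i}<\delta\}$ and a ``far'' set $\{i:\norm{a-a_i}\geq\delta\}$ and bound their contributions to $\Qhat_{\beta}(s,a;\theta)-Q(a)=\sum_i w_i\big(v_i-Q(a)\big)$ separately, where $w_i$ is the normalized RBF weight of centroid $i$. For a near centroid, $\abs{v_i-Q(a)}=\abs{Q(a_i)-Q(a)}\leq\epsilon/2$ by uniform continuity, so the near centroids contribute at most $\epsilon/2$ in total. For a far centroid, the denominator of $w_i$ is at least $e^{-\beta r(a)}$, hence $w_i\leq e^{-\beta(\norm{a-a_i}-r(a))}\leq e^{-\beta(\delta-\delta/2)}=e^{-\beta\delta/2}$; since $\abs{v_i-Q(a)}\leq 2V_{\max}$ and there are at most $N$ of them, the far centroids contribute at most $2NV_{\max}\,e^{-\beta\delta/2}$. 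Therefore $\abs{\Qhat_{\beta}(s,a;\theta)-Q(a)}\leq \epsilon/2 + 2NV_{\max}\,e^{-\beta\delta/2}$ for every $a\in\fancyA$, and choosing $\beta_0$ with $2NV_{\max}\,e^{-\beta_0\delta/2}\leq\epsilon/2$ (for instance $\beta_0=\tfrac{2}{\delta}\ln\tfrac{4NV_{\max}}{\epsilon}$, the bound being trivial when $V_{\max}=0$) yields the claim for all $\beta\geq\beta_0$, because the right-hand side is decreasing in $\beta$.

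The main obstacle, and the reason this requires slightly more than a plain $\beta\to\infty$ Voronoi limit, is that the error bound must hold simultaneously for \emph{all} actions---including those sitting on Voronoi boundaries---and for \emph{all} $\beta\geq\beta_0$, not merely in the limit. The observation that makes this go through is that any centroid carrying non-negligible weight at $a$ must itself be close to $a$: since the nearest centroid is within $\delta/2$, every centroid farther than $\delta$ is exponentially suppressed, so the ``dangerous'' centroids are precisely those whose values are already within $\epsilon/2$ of $Q(a)$. Everything else---the finite subcover, uniform continuity, and boundedness---is standard compactness machinery, and the residual $\fancyO(e^{-\beta})$ term mirrors the one appearing in Theorem~\ref{theorem:maxGap}.
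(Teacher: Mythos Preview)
Your proof is correct and follows essentially the same strategy as the paper's: cover $\fancyA$ with finitely many centroids, set their values to $Q^{\pi}(s,a_i)$, and split the weighted error into contributions from nearby centroids (controlled by continuity) and distant centroids (whose normalized weights decay like $e^{-\beta}$). Your two-set near/far decomposition is in fact a bit cleaner than the paper's three-set central/neighboring/far split, and your invocation of uniform continuity via Heine--Cantor is on firmer ground than the paper's claim that continuity on a compact set yields a Lipschitz constant.
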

\begin{proof}
\citet{normalized_RBF_UFA} provides a proof, but for completeness, we also provide a simplified proof. Since $Q^\pi$ is continuous, we leverage the fact that it is Lipschitz with a Lipschitz constant $L$:
$$\forall a_0,\ a_1\quad |f(a_1)-f(a_0)|\leq L\norm{a_1-a_0}$$
As such, assuming that $\norm{a_1-a_0}\leq \frac{\epsilon}{4L}$, we have that $|f(a_1)-f(a_0)|\leq \frac{\epsilon}{4}$
Consider a set of centroids $\{c_1,c_2,...,c_N\}$, define the $cell(j)$ as:
$$ cell(j)=\{a\in\fancyA|\ \norm{a-c_j}= \min_{z} \norm{a-c_z}\}\ ,$$
and the radius $Rad(j,\fancyA)$ as:
$$Rad(j,\fancyA):=\sup_{x\in cell(j)}\norm{x-c_j}\ .$$
Assuming that $\fancyA$ is a closed set, there always exists a set of centroids $\{c_1,c_2,...,c_N\}$ for which $Rad(c,\fancyA)\leq \frac{\epsilon}{4L}$. Now consider the following functional form:
\begin{eqnarray*}
\Qhat_{\beta}(s,a)&:=&\sum_{j=1}^N Q^\pi(s,c_j)w_j\ ,\\
&&\textrm{where}\quad w_j=\frac{e^{-\beta\norm{a-c_j}}}{\sum_{z=1}^N e^{-\beta\norm{a-c_z}}}\ .
\end{eqnarray*}
Now suppose $a$ lies in a subset of cells, called the \textit{central} cells $\mathcal{C}$:
$$\mathcal{C}:=\{j|a\in cell(j)\}\ ,$$
We define a second \textit{neighboring} set of cells:
$$\mathcal{N}:=\{j|cell(j)\cap \big(\cup_{i\in \mathcal{C}} cell(i)\big)\neq\emptyset \} -\mathcal{C}\ ,$$
and a third set of \textit{far} cells:
$$\mathcal{F}:=\{j|j\notin \mathcal{C}\ \&\ j\notin \mathcal{N} \}\ ,$$
We now have:
\begin{eqnarray*}
|Q^{\pi}(s,a)-\Qhat_{\beta}(s,a;\theta)|&=&|\sum_{j=1}^{N}\big(Q^{\pi}(s,a)-Q^{\pi}(s,c_j)\big)w_j|\leq\sum_{j=1}^{N}\big|Q^{\pi}(s,a)-Q^{\pi}(s,c_j)\big|w_j\\
&=&\sum_{j\in \mathcal C}\big|Q^{\pi}(s,a)-Q^{\pi}(s,c_j)\big|w_j + \sum_{j\in \mathcal N}\big|Q^{\pi}(s,a)-Q^{\pi}(s,c_j)\big|w_j\\
&&+\sum_{j\in \mathcal F}\big|Q^{\pi}(s,a)-Q^{\pi}(s,c_j)\big|w_j
\end{eqnarray*}
We now bound each of the three sums above. Starting from the first sum, it is easy to see that $\big|Q^{\pi}(s,a)-Q^{\pi}(s,c_j)\big|\leq \frac{\epsilon}{4} $, simply because $a\in cell(j)$. As for the second sum, since $c_j$ is the centroid of a neighboring cell, using a central cell $i$, we can write:
$$\norm{a-c_j}= \norm{a-c_i+c_i-c_j}\leq\norm{a-c_i}+\norm{c_i-c_j}\leq \frac{\epsilon}{4L}+\frac{\epsilon}{4L}=\frac{\epsilon}{2L}\ ,$$
and so in this case $\big|Q^{\pi}(s,a)-\Qhat_{\beta}(s,c_j)\big|\leq \frac{\epsilon}{2}$.
In the third case with the set of far cells $\mathcal{F}$, observe that for a far cell $j$ and a central cell $i$ we have:
$$\frac{w_j}{w_i}=\frac{e^{-\beta\norm{a-c_j}}}{e^{-\beta\norm{a-c_i}} } \rightarrow w_j=w_i e^{-\beta(\norm{a-c_j}-\norm{a-c_i})}\leq w_i e^{-\beta\mu}\leq e^{-\beta\mu}, $$
For some $\mu>0$. In the above, we used the fact that $\norm{a-c_j}-\norm{a-c_i}> 0$ is always true. Then:
\begin{eqnarray*}
&&|Q^{\pi}(s,a)-\Qhat_{\beta}(s,a)|\\
&=&\sum_{j\in \mathcal C}\underbrace{\big|Q^{\pi}(s,a)-Q^{\pi}(s,c_j)\big|}_{\leq\frac{\epsilon}{4}}\underbrace{w_j}_{\leq 1} + \sum_{j\in \mathcal N}\underbrace{\big|Q^{\pi}(s,a)-Q^{\pi}(s,c_j)\big|}_{\leq\frac{\epsilon}{2}}\underbrace{w_j}_{1}\\\
&&+\sum_{j\in \mathcal F}\big|Q^{\pi}(s,a)-Q^{\pi}(s,c_j)\big|\underbrace{w_j}_{e^{-\beta\mu}}\\
&\leq&\frac{\epsilon}{4}+\frac{\epsilon}{2}+\sum_{j\in \mathcal F}\big|Q^{\pi}(s,a)-Q^{\pi}(s,c_j)\big|e^{-\beta\mu}\\
&\leq&\frac{\epsilon}{4}+\frac{\epsilon}{2}+2N\sup_{a}|Q^{\pi}(s,a)| e^{-\beta\mu}
\end{eqnarray*}
In order to have $2N\sup_{a}|Q^{\pi}(s,a)| e^{-\beta\mu}\leq \frac{\epsilon}{4}$, it suffices to have $\beta\geq\frac{-1}{\mu}\log(\frac{\epsilon}{8N\sup_{a}|Q^{\pi}(s,a)|}):=\beta_0$. To conclude the proof:
$$|Q^{\pi}(s,a)-\Qhat_{\beta}(s,a;\theta)|\leq\epsilon \quad\forall\ \beta\geq\beta_0\ .$$\end{proof}

\subsection{Hyperparameter Tuning Deep RBVFs}
The following table lists the common hyper-parameters used for deep RBVFs in RBF-DQN and RBF-DDPG.
\begin{figure}[h]
\begin{center}
\begin{tabular}{ c c} 
hyper-parameter & value \\
 \hline
number of hidden layers for centroid values & 3 \\ 
number of hidden layers for centroid locations & 1 \\ 
number of nodes in all hidden layers & 512 \\
target network learning rate (exponential moving average) & 0.005\\
replay buffer size & $5\times 10^5$\\
discount-rate $\gamma$ & 0.99\\
size of mini-batch & 256 \\
number of centroids $N$ & 100 \\
optimizer & RMSProp \\
number of updates per episode & $10^3$ \\
\hline
\end{tabular}

\label{table:RBVF_common_hyper_parameters}
\caption{Common hyper-parameters used for deep RBVFs in RBF-DQN and RBF-DDPG.}
\end{center}
\end{figure}
For RBF-DQN, we tuned the learning rates and the inverse smoothing parameter via grid search. The best settings for each domain is listed below: 
\begin{figure}[h!]
\begin{center}
\begin{tabular}{ c c} 
domain inverse smoothing \\
 \hline
1 \\ 
2 \\ 
2  \\ 
1.5  \\ 
.25 \\ 
.1 \\ 
\hline
\end{tabular}
\label{table:RBF_DQN_hyper_parameters}
\caption{Tuned RBF-DQN inverse smoothing for each domain.}
\end{center}
\end{figure}

\subsection{Reported results from papers for baselines}

\subsubsection{ICNN, NAF, and CAQL}
For the domains we include, the authors report results on some domains. We rely on the ICNN authors' results for NAF as the original paper does not use OpenAI Gym environments. ICNN and NAF report results for Hopper-v3, HalfCheetah-v3, and Ant-v3. CAQL reports results on Pendulum-v0, Hopper-v3, and HalfCheetah-v3. For these baselines, we report the highest score achieved by any variation of the algorithm from the paper. For ICNN and NAF, no mention of steps or episodes is included, so we simply take the highest score. For CAQL, Pendulum-v0 is ran for 200k steps, which is significantly higher than 200 episodes (equivalent to 40000 steps as each episode is exactly 200 steps). Meanwhile, Hopper-v3 and HalfCheetah-v3 are ran for 500k steps. Since there is no available implementation, we simply report the highest score across their many variants. 

\subsubsection{DDPG, TD3, and SAC}
We manage to include the step count when run for the specified episodes for the SAC implementation. For the three domains, we include the average step count here:

\begin{figure}[h!]
\begin{center}
\begin{tabular}{ c c c c} 
domain & episodes & steps \\
\hline
Hopper-v3 & 1000 & 518494 \\ 
HalfCheetah-v3 & 2000 & 2000000 \\ 
Ant-v3 & 2000 & 1076890 \\ 
\hline
\end{tabular}
\label{table:SAC_steps}
\caption{Average steps reached by SAC}
\end{center}
\end{figure}

Additionally, aside from HalfCheetah-v3 (which always has a fixed episode length of 1000 steps), we manage to reach the claimed performance for Hopper-v3 and Ant-v3, enabling us to trust our step counts to be representative. 

Since the authors' implementations did not include step counts for TD3 and DDPG, we use SAC to estimate the step count for Hopper-v3 and Ant-v3 (as HalfCheetah-v3 is exactly known) for a fair comparison. We also note that it was difficult to reproduce the TD3 results even with the authors' code. We highlight a comment in the README, "Code is no longer exactly representative of the code used in the paper.", as a possible explanation.
\end{document}


\onecolumn
\section{Proofs}
\begin{theorem}
Let $\Qhat_{\beta}$ be a member of the class of normalized Gaussian RBF value functions.
\begin{enumerate}[I)]
    \item For a one-dimensional action space $\mathcal{A}=\mathcal{R}$: $$\max_{a\in\fancyA} \Qhat_{\beta}(s,a;\theta)=\max_{i\in[1,N]} \Qhat_{\beta}(s,a_i;\theta)\ .$$
    \item For $\fancyA=\fancyR^d\quad \forall d\geq 1$:
    $$0\leq\max_{a\in\fancyA} \Qhat_{\beta}(s,a;\theta)-\!\max_{i\in[1,N]} \Qhat_{\beta}(s,a_i;\theta)\leq \mathcal{O}(e^{-\beta}) \ .$$
\end{enumerate}
\label{theorem:maxGap}
\end{theorem}
\begin{proof}
We begin by proving the first result. For an arbitrary action $a$, we can write:
$$\Qhat_{\beta}(s,a;\theta)=w_1 v_1(s;\theta) + ... + w_N v_N(s;\theta) \ ,$$
where each weight $w_i$ is determined via softmax. Without loss of generality, we sort all anchor points so that $\forall i\ a_i<a_{i+1}$. Take two neighboring centroids $a_{L}$ and $a_{R}$ and notice that:
$$\forall i<L, \quad \frac{w_{L}}{w_i}=\frac{e^{-|a-a_L|}}{e^{-|a-a_{i}|}}=\frac{e^{-a+a_L}}{e^{-a+a_i}}=e^{a_L-a_i}\ \myeq \frac{1}{c_i}\ \implies w_i=w_L c_i\ .$$
In the above, we used the fact that all $a_i$ are to the left of $a$ and $a_L$. Similarly, we can argue that $\forall i>R\quad W_i=W_R c_i$. Intuitively, 
as long as the action is between $a_L$ and $a_R$, the ratio of the weight of a centroid to the left of $a_L$, over the weight of $a_L$ itself, remains constant and does not change with $a$. The same holds for the centroids to the right of $a_R$.
In light of the above result, by renaming some variables we can now write:
\begin{eqnarray*}
		\Qhat_{\beta}(s,a;\theta)&=&w_1 v_1(s;\theta) + ... + w_L v_L(s;\theta) + w_R v_R(s;\theta) + ... + w_K v_K(s;\theta)\\
		&=&w_L c_1 v_1(s;\theta) + ... + w_L v_L(s;\theta) + w_R v_R(s;\theta) + ... + w_R c_K v_K(s;\theta) \\
		&=&w_L \big(c_1 v_1(s;\theta) + ... + v_L(s;\theta)\big) + w_R (v_R(s;\theta) + ... + c_K v_K(s;\theta))\ .
\end{eqnarray*}
Moreover, note that the weights need to sum up to 1:
$$w_L (c_1 + ... + 1) + w_R (1 + ... + c_K)=1\quad ,$$
and $w_L$ is at its peak when we choose $a=a_L$ and at its smallest value when we choose $a=a_R$. A converse statement is true about $w_R$. Moreover, the weights monotonically increase and decrease as we move the input $a$. We call the endpoints of the range $w_{min}$ and $w_{max}$. As such, the problem $\max_{a\in[a_L,a_R]}\ \Qhat_{\beta}(s,a;\theta)$ could be written as this linear program:
\begin{eqnarray*}
	\max_{w_L,w_R} && w_L \big(c_1 v_1(s;\theta) + ... + v_L(s;\theta)\big) + w_R \big(v_R(s;\theta) + ... + c_K v_K(s;\theta)\big)\\
	s.t. && w_L (c_1 + ... + 1) + w_R (1 + ... + c_K)=1\\
	&& w_L,w_R\geq W_{min}\\
	&& w_L,w_R\leq W_{max}
\end{eqnarray*}
A standard result in linear programming is that every linear program has an extreme point that is an optimal solution \citep{convex}. Therefore, at least one of the points $(w_L=w_{min},w_R=w_{max})$ or $(w_L=w_{max},w_R=w_{min})$ is an optimal solution. It is easy to see that there is a one-to-one mapping between $a$ and $W_L,W_R$ in light of the monotonic property. As a result, the first point corresponds to the unique value of $a=a_R(s)$, and the second corresponds to unique value of $a=a_L(s)$. 
Since no point in between two centroids can be bigger than the surrounding centroids, at least one of the centroids is a globally optimal solution in the range $[a_1(s),a_N(s)]$, that is $$\max_{a\in[a_1(s;\theta),a_N(s;\theta)]}\Qhat_{\beta}(s,a;\theta)=\max_{a_i}\Qhat_{\beta}(s,a_i;\theta). $$ \\

To finish the proof, we can show that $\forall a<a_1\ \Qhat_{\beta}(s,a;\theta)=\Qhat_{\beta}(s,a_1;\theta)$. The proof for $\forall a>a_N\ \Qhat_{\beta}(s,a;\theta)=\Qhat_{\beta}(s,a_N;\theta)$ follows similar steps. So,
\begin{eqnarray*}
	\forall a<a_1\ \Qhat_{\beta}(s,a;\theta)&=&\frac{\sum_{i=1}^N e^{-\beta \abs{a-a_{i}(s)}}v_i(s)}{\sum_{i=1}^N e^{-\beta \abs{a-a_{i}(s)}}}\\
	&=&\frac{\sum_{i=1}^N e^{-\beta \abs{a_1-c-a_i(s)}}v_i(s)}{\sum_{i=1}^N e^{-\beta \abs{a_1-c-a_i(s)}}}\\
	&=&\frac{\sum_{i=1}^N e^{\beta (a_1-c-a_i(s))}v_i(s)}{\sum_{i=1}^N e^{\beta (a_1-c-a_i(s))}}\\
	&=&\frac{e^{-c}\sum_{i=1}^N e^{\beta (a_1-a_i(s))}v_i(s)}{e^{-c}\sum_{i=1}^N e^{\beta (a_1-a_i(s))}}\\
	&=&\frac{\sum_{i=1}^N e^{\beta (a_1-a_i(s))}v_i(s)}{\sum_{i=1}^N e^{\beta (a_1-a_i(s))}}=\Qhat_{\beta}(s,a_1;\theta)\ ,
\end{eqnarray*}
which concludes the proof of the first part.

We now move to the more general case with $\fancyA=\fancyR^m$:
	\begin{eqnarray*}
		\max_{a} \Qhat_{\beta}(s,a;\theta)-\max_{i\in \{1:N\}} \Qhat(s,a_i;\theta)&\leq& v_{max}(s;\theta)-\max_{i\in \{1:N\}} \Qhat(s,a_i;\theta)\\
		&\leq& v_{max}(s;\theta)-\Qhat_{\beta}(s,a_{max};\theta)\ .
	\end{eqnarray*}
WLOG, we assume the first centroid is the one with highest $v$, that is $v_1(s;\theta)=\arg \max_{v_i} v_i(s;\theta)$, and conclude the proof. Note that a related result was shown recently~\cite{song2019revisiting}:
	\begin{eqnarray*}
		v_{max}(s)-\Qhat_{\beta}(s,a_{max};\theta)&=&v_{1} - \frac{\sum_{i=1}^N e^{-\beta \norm{a_{1}-a_i(s)}}v_i(s)}{\sum_{i=1}^N e^{-\beta \norm{a_{1}-a_i(s)}}}\\
		&=&\frac{\sum_{i=1}^N e^{-\beta \norm{a_{1}-a_i(s)}}\big(v_1(s)-v_i(s)\big)}{\sum_{i=1}^N e^{-\beta \norm{a_{1}-a_i(s)}}}\\
		&=&\frac{\sum_{i=2}^N e^{-\beta \norm{a_{1}-a_i(s)}}\big(v_1(s)-v_i(s)\big)}{1+\sum_{k=2}^K e^{-\beta \norm{a_{1}-a_i(s)}}}\\
		&\leq &\Delta_q\frac{\sum_{i=2}^N e^{-\beta \norm{a_{1}-a_i(s)}}}{1+\sum_{i=2}^N e^{-\beta \norm{a_{1}-a_i(s)}}}\\
		&\leq &\Delta_q\sum_{i=2}^N\frac{ e^{-\beta \norm{a_{1}-a_i(s)}}}{1+ e^{-\beta \norm{a_{1}-a_i(s)}}}\\
		&=&\Delta_q\sum_{i=2}^N\frac{ 1}{1+ e^{\beta \norm{a_{1}-a_i(s)}}}=\fancyO(e^{-\beta}).
	\end{eqnarray*}\end{proof}
	
\begin{theorem}
Consider any state--action value function $Q^{\pi}(s,a)$ defined on a closed action space $\fancyA$. Assume that $Q^{\pi}(s,a)$ is a continuous function. For a fixed state $s$ and for any $\epsilon>0$, there exists a deep RBF value function $\Qhat_{\beta}(s,a;\theta)$ and a setting of the smoothing parameter $\beta_0$ for which:
$$\forall a\in \fancyA\quad \forall \beta\geq \beta_0 \quad |Q^{\pi}(s,a)-\Qhat_{\beta}(s,a;\theta)|\leq\epsilon \ .$$
\label{theorem:ufa}
\end{theorem}
\begin{proof}
Since $Q^\pi$ is continuous, we leverage the fact that it is Lipschitz with a Lipschitz constant $L$:
$$\forall a_0,\ a_1\quad |f(a_1)-f(a_0)|\leq L\norm{a_1-a_0}$$
As such, assuming that $\norm{a_1-a_0}\leq \frac{\epsilon}{4L}$, we have that 
\begin{equation}
    |f(a_1)-f(a_0)|\leq \frac{\epsilon}{4}
\end{equation}
Consider a set of centroids $\{c_1,c_2,...,c_N\}$, define the $cell(j)$ as:
$$ cell(j)=\{a\in\fancyA|\ \norm{a-c_j}= \min_{z} \norm{a-c_z}\}\ ,$$
and the radius $Rad(j,\fancyA)$ as:
$$Rad(j,\fancyA):=\sup_{x\in cell(j)}\norm{x-c_j}\ .$$
Assuming that $\fancyA$ is a closed set, there always exists a set of centroids $\{c_1,c_2,...,c_N\}$ for which $Rad(c,\fancyA)\leq \frac{\epsilon}{4L}$. Now consider the following functional form:
\begin{eqnarray*}
\Qhat_{\beta}(s,a)&:=&\sum_{j=1}^N Q^\pi(s,c_j)w_j\ ,\\
&&\textrm{where}\quad w_j=\frac{e^{-\beta\norm{a-c_j}}}{\sum_{z=1}^N e^{-\beta\norm{a-c_z}}}\ .
\end{eqnarray*}
Now suppose $a$ lies in a subset of cells, called the \textit{central} cells $\mathcal{C}$:
$$\mathcal{C}:=\{j|a\in cell(j)\}\ ,$$
We define a second \textit{neighboring} set of cells:
$$\mathcal{N}:=\{j|cell(j)\cap \big(\cup_{i\in \mathcal{C}} cell(i)\big)\neq\emptyset \} -\mathcal{C}\ ,$$
and a third set of \textit{far} cells:
$$\mathcal{F}:=\{j|j\notin \mathcal{C}\ \&\ j\notin \mathcal{N} \}\ ,$$
We now have:
\begin{eqnarray*}
|Q^{\pi}(s,a)-\Qhat_{\beta}(s,a;\theta)|&=&|\sum_{j=1}^{N}\big(Q^{\pi}(s,a)-Q^{\pi}(s,c_j)\big)w_j|\\
&\leq&\sum_{j=1}^{N}\big|Q^{\pi}(s,a)-Q^{\pi}(s,c_j)\big|w_j\\
&=&\sum_{j\in \mathcal C}\big|Q^{\pi}(s,a)-Q^{\pi}(s,c_j)\big|w_j + \sum_{j\in \mathcal N}\big|Q^{\pi}(s,a)-Q^{\pi}(s,c_j)\big|w_j+\sum_{j\in \mathcal F}\big|Q^{\pi}(s,a)-Q^{\pi}(s,c_j)\big|w_j
\end{eqnarray*}
We now bound each of the three sums above. Starting from the first sum, it is easy to see that $\big|Q^{\pi}(s,a)-Q^{\pi}(s,c_j)\big|\leq \frac{\epsilon}{4} $, simply because $a\in cell(j)$. As for the second sum, since $c_j$ is the centroid of a neighboring cell, using a central cell $i$, we can write:
$$\norm{a-c_j}= \norm{a-c_i+c_i-c_j}\leq\norm{a-c_i}+\norm{c_i-c_j}\leq \frac{\epsilon}{4L}+\frac{\epsilon}{4L}=\frac{\epsilon}{2L}\ ,$$
and so in this case $\big|Q^{\pi}(s,a)-\Qhat_{\beta}(s,c_j)\big|\leq \frac{\epsilon}{2}$.
In the third case with the set of far cells $\mathcal{F}$, observe that for a far cell $j$ and a central cell $i$ we have:
$$\frac{w_j}{w_i}=\frac{e^{-\beta\norm{a-c_j}}}{e^{-\beta\norm{a-c_i}} } \rightarrow w_j=w_i e^{-\beta(\norm{a-c_j}-\norm{a-c_i})}\leq w_i e^{-\beta\mu}\leq e^{-\beta\mu}, $$
For some $\mu>0$. In the above, we used the fact that $\norm{a-c_j}-\norm{a-c_i}> 0$ is always true.

Putting it all together, we have:
\begin{eqnarray*}
&&|Q^{\pi}(s,a)-\Qhat_{\beta}(s,a)|\\
&=&\sum_{j\in \mathcal C}\underbrace{\big|Q^{\pi}(s,a)-Q^{\pi}(s,c_j)\big|}_{\leq\frac{\epsilon}{4}}\underbrace{w_j}_{\leq 1} + \sum_{j\in \mathcal N}\underbrace{\big|Q^{\pi}(s,a)-Q^{\pi}(s,c_j)\big|}_{\leq\frac{\epsilon}{2}}\underbrace{w_j}_{1}+\sum_{j\in \mathcal F}\big|Q^{\pi}(s,a)-Q^{\pi}(s,c_j)\big|\underbrace{w_j}_{e^{-\beta\mu}}\\
&\leq&\frac{\epsilon}{4}+\frac{\epsilon}{2}+\sum_{j\in \mathcal F}\big|Q^{\pi}(s,a)-Q^{\pi}(s,c_j)\big|e^{-\beta\mu}\\
&\leq&\frac{\epsilon}{4}+\frac{\epsilon}{2}+2N\sup_{a}|Q^{\pi}(s,a)| e^{-\beta\mu}
\end{eqnarray*}
In order to have $2N\sup_{a}|Q^{\pi}(s,a)| e^{-\beta\mu}\leq \frac{\epsilon}{4}$, it suffices to have $\beta\geq\frac{-1}{\mu}\log(\frac{\epsilon}{8N\sup_{a}|Q^{\pi}(s,a)|}):=\beta_0$. To conclude the proof:
$$|Q^{\pi}(s,a)-\Qhat_{\beta}(s,a;\theta)|\leq\epsilon \quad\forall\ \beta\geq\beta_0\ .$$
For a similar proof, see \cite{normalized_RBF_UFA}. \end{proof}
\section{Hyper-parameter Tuning}
\subsection{Common Hyper-parameters}
For all value-function-based baselines, per one episode, we performed 100 updates to their value network. For DDPG, we performed 100 updates to the value network, and 100 updates to the policy network. For TD3, we performed 50 updates to the policy due to its delayed policy updates. The target network parameters were updated in all baselines using a step size $\alpha_{\theta^-}=0.005$. The maximum length of the replay buffer was fixed to 500000. Rewards were always clipped to the range $[-20,20]$. We used a batch size 256. In terms of network topology, we tuned the number of nodes per layer and the number of hidden layers of each network, and chose the one that performed robustly across all 9 domains. Specifically we tried 1,2, and 3 hidden layers each having 128 or 512 nodes. For DDPG, TD3, and input-convex neural nets, we also tried network topology parameters according to the settings suggested in the original papers.

To determine the best hyper-parameters, we ran each algorithm for 3 independent runs using the chosen hyper-parameter setting, and selected the one that performed best on average. Best performance was defined as the highest average episodic return after training was done. We now move to domain-dependent hyper-parameters.
\subsection{RBF-DQN}
For each domain, we tuned the smoothing parameter $\beta$ using random search \cite{deep_learning_book} from the range $[0.1,3]$. We also tuned the learning rate for RMSProp using random search from the range $[5\times 10^{-6},5\times 10^{-2}]$. All other hyper-parameters were kept constant across different domains.
\subsection{Feed-forward Network}
For each domain, we tuned tuned the learning rate for RMSProp using random search from the range $[5\times 10^{-6},5\times 10^{-2}]$. We also tuned the parameters of gradient ascent optimizer for solving $\max_{a\in\fancyA} \Qhat(s,a;\theta)$. In particular, there were two parameters, namely a step size that was tuned using random search from the range $[0.0001,0.1]$, and the number of gradient ascent steps which was tuned using grid search from $\{10,20,50\}$. All other hyper-parameters were kept constant across different domains.
\subsection{Wire Fitting}
For each domain, we tuned the learning rate for RMSProp using random search from the range $[5\times 10^{-6},5\times 10^{-2}]$. All other hyper-parameters were kept constant across different domains.
\subsection{Input-Convex Neural Network}
For each domain, we tuned tuned the learning rate for RMSProp using random search from the range $[5\times 10^{-6},5\times 10^{-2}]$. All other hyper-parameters were kept constant across different domains.
\subsection{DDPG}
For each domain, we tuned the two learning rates from the range $[5\times 10^{-6},5\times 10^{-2}]$. Note that the learning rates for the value network and the policy network were tuned separately. For each individual domain, we tried two optimizers, namely RMSProp and Adam \cite{adam}. All other hyper-parameters were kept constant across different domains.
\subsection{TD3}
Similar to DDPG, we tuned the two learning rates for the value network and the policy network using the same range. We again tried two optimizers, namely RMSProp and Adam \cite{adam}. \citet{TD3} also introduced a $\sigma$ parameter for target policy regularization in TD3. We tuned this hyper-parameter for each domain as well using grid search and from $\{0.01,0.05,0.1,0.2\}$. All other hyper-parameters were kept constant across different domains.
\newpage
\section{A Comparison based on Final Performance}
\begin{figure*}[h]
    \centering
    \includegraphics[width=0.95\columnwidth]{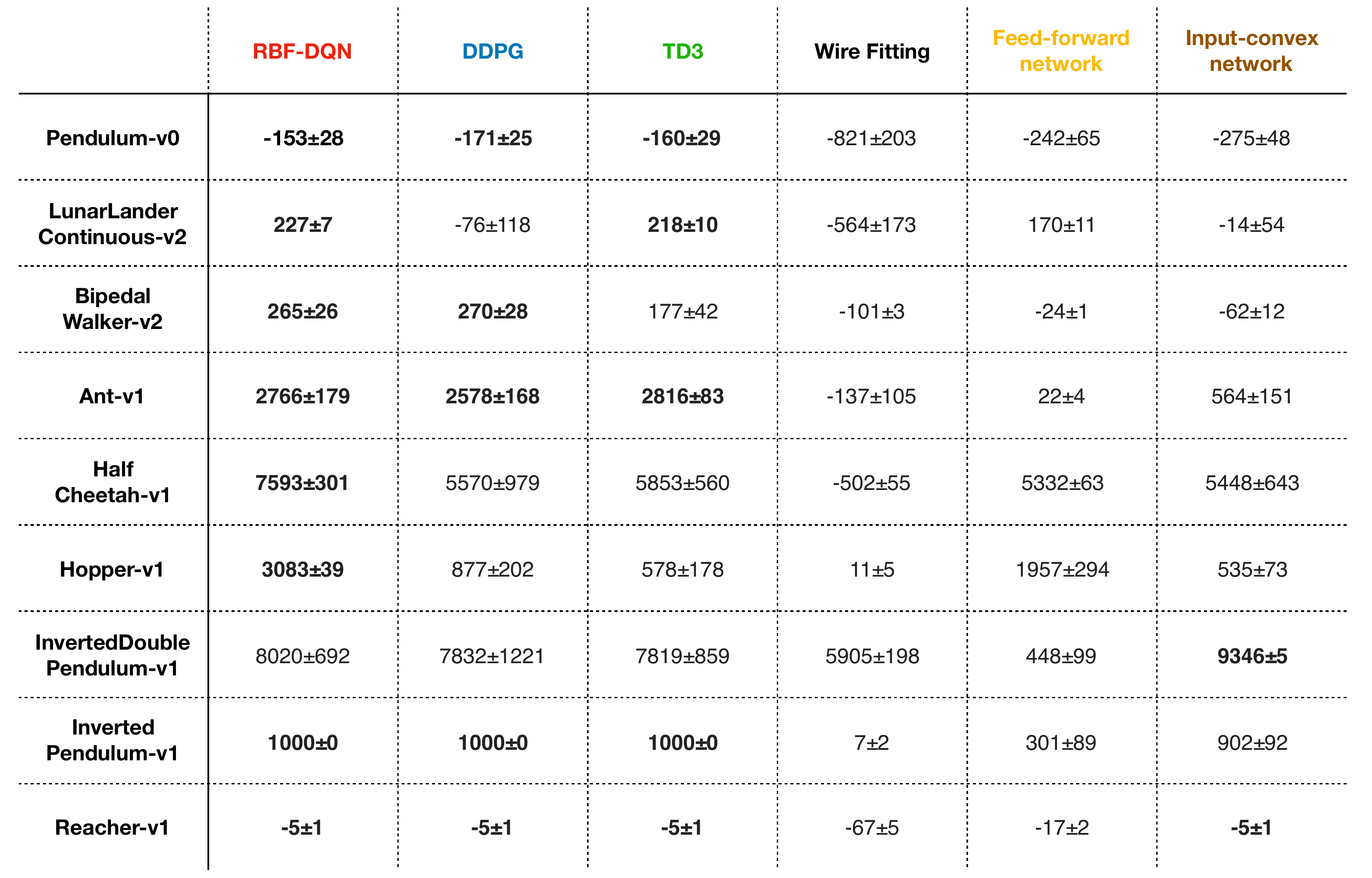}
    \caption{A comparison between RBF-DQN and different deep-RL baselines based on final performance.}
    \label{fig:RL_final_results}
\end{figure*}
\bibliographystyle{icml2020}
\bibliography{bibfile}


\onecolumn
\section{Proofs}
\begin{theorem}
Let $\Qhat_{\beta}$ be a member of the class of normalized Gaussian RBF value functions.
\begin{enumerate}[I)]
    \item For a one-dimensional action space $\mathcal{A}=\mathcal{R}$: $$\max_{a\in\fancyA} \Qhat_{\beta}(s,a;\theta)=\max_{i\in[1,N]} \Qhat_{\beta}(s,a_i;\theta)\ .$$
    \item For $\fancyA=\fancyR^d\quad \forall d\geq 1$:
    $$0\leq\max_{a\in\fancyA} \Qhat_{\beta}(s,a;\theta)-\!\max_{i\in[1,N]} \Qhat_{\beta}(s,a_i;\theta)\leq \mathcal{O}(e^{-\beta}) \ .$$
\end{enumerate}
\label{theorem:maxGap}
\end{theorem}
\begin{proof}
We begin by proving the first result. For an arbitrary action $a$, we can write:
$$\Qhat_{\beta}(s,a;\theta)=w_1 v_1(s;\theta) + ... + w_N v_N(s;\theta) \ ,$$
where each weight $w_i$ is determined via softmax. Without loss of generality, we sort all anchor points so that $\forall i\ a_i<a_{i+1}$. Take two neighboring centroids $a_{L}$ and $a_{R}$ and notice that:
$$\forall i<L, \quad \frac{w_{L}}{w_i}=\frac{e^{-|a-a_L|}}{e^{-|a-a_{i}|}}=\frac{e^{-a+a_L}}{e^{-a+a_i}}=e^{a_L-a_i}\ \myeq \frac{1}{c_i}\ \implies w_i=w_L c_i\ .$$
In the above, we used the fact that all $a_i$ are to the left of $a$ and $a_L$. Similarly, we can argue that $\forall i>R\quad W_i=W_R c_i$. Intuitively, 
as long as the action is between $a_L$ and $a_R$, the ratio of the weight of a centroid to the left of $a_L$, over the weight of $a_L$ itself, remains constant and does not change with $a$. The same holds for the centroids to the right of $a_R$.
In light of the above result, by renaming some variables we can now write:
\begin{eqnarray*}
		\Qhat_{\beta}(s,a;\theta)&=&w_1 v_1(s;\theta) + ... + w_L v_L(s;\theta) + w_R v_R(s;\theta) + ... + w_K v_K(s;\theta)\\
		&=&w_L c_1 v_1(s;\theta) + ... + w_L v_L(s;\theta) + w_R v_R(s;\theta) + ... + w_R c_K v_K(s;\theta) \\
		&=&w_L \big(c_1 v_1(s;\theta) + ... + v_L(s;\theta)\big) + w_R (v_R(s;\theta) + ... + c_K v_K(s;\theta))\ .
\end{eqnarray*}
Moreover, note that the weights need to sum up to 1:
$$w_L (c_1 + ... + 1) + w_R (1 + ... + c_K)=1\quad ,$$
and $w_L$ is at its peak when we choose $a=a_L$ and at its smallest value when we choose $a=a_R$. A converse statement is true about $w_R$. Moreover, the weights monotonically increase and decrease as we move the input $a$. We call the endpoints of the range $w_{min}$ and $w_{max}$. As such, the problem $\max_{a\in[a_L,a_R]}\ \Qhat_{\beta}(s,a;\theta)$ could be written as this linear program:
\begin{eqnarray*}
	\max_{w_L,w_R} && w_L \big(c_1 v_1(s;\theta) + ... + v_L(s;\theta)\big) + w_R \big(v_R(s;\theta) + ... + c_K v_K(s;\theta)\big)\\
	s.t. && w_L (c_1 + ... + 1) + w_R (1 + ... + c_K)=1\\
	&& w_L,w_R\geq W_{min}\\
	&& w_L,w_R\leq W_{max}
\end{eqnarray*}
A standard result in linear programming is that every linear program has an extreme point that is an optimal solution \citep{convex}. Therefore, at least one of the points $(w_L=w_{min},w_R=w_{max})$ or $(w_L=w_{max},w_R=w_{min})$ is an optimal solution. It is easy to see that there is a one-to-one mapping between $a$ and $W_L,W_R$ in light of the monotonic property. As a result, the first point corresponds to the unique value of $a=a_R(s)$, and the second corresponds to unique value of $a=a_L(s)$. 
Since no point in between two centroids can be bigger than the surrounding centroids, at least one of the centroids is a globally optimal solution in the range $[a_1(s),a_N(s)]$, that is $$\max_{a\in[a_1(s;\theta),a_N(s;\theta)]}\Qhat_{\beta}(s,a;\theta)=\max_{a_i}\Qhat_{\beta}(s,a_i;\theta). $$ \\

To finish the proof, we can show that $\forall a<a_1\ \Qhat_{\beta}(s,a;\theta)=\Qhat_{\beta}(s,a_1;\theta)$. The proof for $\forall a>a_N\ \Qhat_{\beta}(s,a;\theta)=\Qhat_{\beta}(s,a_N;\theta)$ follows similar steps. So,
\begin{eqnarray*}
	\forall a<a_1\ \Qhat_{\beta}(s,a;\theta)&=&\frac{\sum_{i=1}^N e^{-\beta \abs{a-a_{i}(s)}}v_i(s)}{\sum_{i=1}^N e^{-\beta \abs{a-a_{i}(s)}}}\\
	&=&\frac{\sum_{i=1}^N e^{-\beta \abs{a_1-c-a_i(s)}}v_i(s)}{\sum_{i=1}^N e^{-\beta \abs{a_1-c-a_i(s)}}}\\
	&=&\frac{\sum_{i=1}^N e^{\beta (a_1-c-a_i(s))}v_i(s)}{\sum_{i=1}^N e^{\beta (a_1-c-a_i(s))}}\\
	&=&\frac{e^{-c}\sum_{i=1}^N e^{\beta (a_1-a_i(s))}v_i(s)}{e^{-c}\sum_{i=1}^N e^{\beta (a_1-a_i(s))}}\\
	&=&\frac{\sum_{i=1}^N e^{\beta (a_1-a_i(s))}v_i(s)}{\sum_{i=1}^N e^{\beta (a_1-a_i(s))}}=\Qhat_{\beta}(s,a_1;\theta)\ ,
\end{eqnarray*}
which concludes the proof of the first part.

We now move to the more general case with $\fancyA=\fancyR^m$:
	\begin{eqnarray*}
		\max_{a} \Qhat_{\beta}(s,a;\theta)-\max_{i\in \{1:N\}} \Qhat(s,a_i;\theta)&\leq& v_{max}(s;\theta)-\max_{i\in \{1:N\}} \Qhat(s,a_i;\theta)\\
		&\leq& v_{max}(s;\theta)-\Qhat_{\beta}(s,a_{max};\theta)\ .
	\end{eqnarray*}
WLOG, we assume the first centroid is the one with highest $v$, that is $v_1(s;\theta)=\arg \max_{v_i} v_i(s;\theta)$, and conclude the proof. Note that a related result was shown recently~\cite{song2019revisiting}:
	\begin{eqnarray*}
		v_{max}(s)-\Qhat_{\beta}(s,a_{max};\theta)&=&v_{1} - \frac{\sum_{i=1}^N e^{-\beta \norm{a_{1}-a_i(s)}}v_i(s)}{\sum_{i=1}^N e^{-\beta \norm{a_{1}-a_i(s)}}}\\
		&=&\frac{\sum_{i=1}^N e^{-\beta \norm{a_{1}-a_i(s)}}\big(v_1(s)-v_i(s)\big)}{\sum_{i=1}^N e^{-\beta \norm{a_{1}-a_i(s)}}}\\
		&=&\frac{\sum_{i=2}^N e^{-\beta \norm{a_{1}-a_i(s)}}\big(v_1(s)-v_i(s)\big)}{1+\sum_{k=2}^K e^{-\beta \norm{a_{1}-a_i(s)}}}\\
		&\leq &\Delta_q\frac{\sum_{i=2}^N e^{-\beta \norm{a_{1}-a_i(s)}}}{1+\sum_{i=2}^N e^{-\beta \norm{a_{1}-a_i(s)}}}\\
		&\leq &\Delta_q\sum_{i=2}^N\frac{ e^{-\beta \norm{a_{1}-a_i(s)}}}{1+ e^{-\beta \norm{a_{1}-a_i(s)}}}\\
		&=&\Delta_q\sum_{i=2}^N\frac{ 1}{1+ e^{\beta \norm{a_{1}-a_i(s)}}}=\fancyO(e^{-\beta}).
	\end{eqnarray*}\end{proof}
	
\begin{theorem}
Consider any state--action value function $Q^{\pi}(s,a)$ defined on a closed action space $\fancyA$. Assume that $Q^{\pi}(s,a)$ is a continuous function. For a fixed state $s$ and for any $\epsilon>0$, there exists a deep RBF value function $\Qhat_{\beta}(s,a;\theta)$ and a setting of the smoothing parameter $\beta_0$ for which:
$$\forall a\in \fancyA\quad \forall \beta\geq \beta_0 \quad |Q^{\pi}(s,a)-\Qhat_{\beta}(s,a;\theta)|\leq\epsilon \ .$$
\label{theorem:ufa}
\end{theorem}
\begin{proof}
Since $Q^\pi$ is continuous, we leverage the fact that it is Lipschitz with a Lipschitz constant $L$:
$$\forall a_0,\ a_1\quad |f(a_1)-f(a_0)|\leq L\norm{a_1-a_0}$$
As such, assuming that $\norm{a_1-a_0}\leq \frac{\epsilon}{4L}$, we have that 
\begin{equation}
    |f(a_1)-f(a_0)|\leq \frac{\epsilon}{4}
\end{equation}
Consider a set of centroids $\{c_1,c_2,...,c_N\}$, define the $cell(j)$ as:
$$ cell(j)=\{a\in\fancyA|\ \norm{a-c_j}= \min_{z} \norm{a-c_z}\}\ ,$$
and the radius $Rad(j,\fancyA)$ as:
$$Rad(j,\fancyA):=\sup_{x\in cell(j)}\norm{x-c_j}\ .$$
Assuming that $\fancyA$ is a closed set, there always exists a set of centroids $\{c_1,c_2,...,c_N\}$ for which $Rad(c,\fancyA)\leq \frac{\epsilon}{4L}$. Now consider the following functional form:
\begin{eqnarray*}
\Qhat_{\beta}(s,a)&:=&\sum_{j=1}^N Q^\pi(s,c_j)w_j\ ,\\
&&\textrm{where}\quad w_j=\frac{e^{-\beta\norm{a-c_j}}}{\sum_{z=1}^N e^{-\beta\norm{a-c_z}}}\ .
\end{eqnarray*}
Now suppose $a$ lies in a subset of cells, called the \textit{central} cells $\mathcal{C}$:
$$\mathcal{C}:=\{j|a\in cell(j)\}\ ,$$
We define a second \textit{neighboring} set of cells:
$$\mathcal{N}:=\{j|cell(j)\cap \big(\cup_{i\in \mathcal{C}} cell(i)\big)\neq\emptyset \} -\mathcal{C}\ ,$$
and a third set of \textit{far} cells:
$$\mathcal{F}:=\{j|j\notin \mathcal{C}\ \&\ j\notin \mathcal{N} \}\ ,$$
We now have:
\begin{eqnarray*}
|Q^{\pi}(s,a)-\Qhat_{\beta}(s,a;\theta)|&=&|\sum_{j=1}^{N}\big(Q^{\pi}(s,a)-Q^{\pi}(s,c_j)\big)w_j|\\
&\leq&\sum_{j=1}^{N}\big|Q^{\pi}(s,a)-Q^{\pi}(s,c_j)\big|w_j\\
&=&\sum_{j\in \mathcal C}\big|Q^{\pi}(s,a)-Q^{\pi}(s,c_j)\big|w_j + \sum_{j\in \mathcal N}\big|Q^{\pi}(s,a)-Q^{\pi}(s,c_j)\big|w_j+\sum_{j\in \mathcal F}\big|Q^{\pi}(s,a)-Q^{\pi}(s,c_j)\big|w_j
\end{eqnarray*}
We now bound each of the three sums above. Starting from the first sum, it is easy to see that $\big|Q^{\pi}(s,a)-Q^{\pi}(s,c_j)\big|\leq \frac{\epsilon}{4} $, simply because $a\in cell(j)$. As for the second sum, since $c_j$ is the centroid of a neighboring cell, using a central cell $i$, we can write:
$$\norm{a-c_j}= \norm{a-c_i+c_i-c_j}\leq\norm{a-c_i}+\norm{c_i-c_j}\leq \frac{\epsilon}{4L}+\frac{\epsilon}{4L}=\frac{\epsilon}{2L}\ ,$$
and so in this case $\big|Q^{\pi}(s,a)-\Qhat_{\beta}(s,c_j)\big|\leq \frac{\epsilon}{2}$.
In the third case with the set of far cells $\mathcal{F}$, observe that for a far cell $j$ and a central cell $i$ we have:
$$\frac{w_j}{w_i}=\frac{e^{-\beta\norm{a-c_j}}}{e^{-\beta\norm{a-c_i}} } \rightarrow w_j=w_i e^{-\beta(\norm{a-c_j}-\norm{a-c_i})}\leq w_i e^{-\beta\mu}\leq e^{-\beta\mu}, $$
For some $\mu>0$. In the above, we used the fact that $\norm{a-c_j}-\norm{a-c_i}> 0$ is always true.

Putting it all together, we have:
\begin{eqnarray*}
&&|Q^{\pi}(s,a)-\Qhat_{\beta}(s,a)|\\
&=&\sum_{j\in \mathcal C}\underbrace{\big|Q^{\pi}(s,a)-Q^{\pi}(s,c_j)\big|}_{\leq\frac{\epsilon}{4}}\underbrace{w_j}_{\leq 1} + \sum_{j\in \mathcal N}\underbrace{\big|Q^{\pi}(s,a)-Q^{\pi}(s,c_j)\big|}_{\leq\frac{\epsilon}{2}}\underbrace{w_j}_{1}+\sum_{j\in \mathcal F}\big|Q^{\pi}(s,a)-Q^{\pi}(s,c_j)\big|\underbrace{w_j}_{e^{-\beta\mu}}\\
&\leq&\frac{\epsilon}{4}+\frac{\epsilon}{2}+\sum_{j\in \mathcal F}\big|Q^{\pi}(s,a)-Q^{\pi}(s,c_j)\big|e^{-\beta\mu}\\
&\leq&\frac{\epsilon}{4}+\frac{\epsilon}{2}+2N\sup_{a}|Q^{\pi}(s,a)| e^{-\beta\mu}
\end{eqnarray*}
In order to have $2N\sup_{a}|Q^{\pi}(s,a)| e^{-\beta\mu}\leq \frac{\epsilon}{4}$, it suffices to have $\beta\geq\frac{-1}{\mu}\log(\frac{\epsilon}{8N\sup_{a}|Q^{\pi}(s,a)|}):=\beta_0$. To conclude the proof:
$$|Q^{\pi}(s,a)-\Qhat_{\beta}(s,a;\theta)|\leq\epsilon \quad\forall\ \beta\geq\beta_0\ .$$
For a similar proof, see \cite{normalized_RBF_UFA}. \end{proof}
\section{Hyper-parameter Tuning}
\subsection{Common Hyper-parameters}
For all value-function-based baselines, per one episode, we performed 100 updates to their value network. For DDPG, we performed 100 updates to the value network, and 100 updates to the policy network. For TD3, we performed 50 updates to the policy due to its delayed policy updates. The target network parameters were updated in all baselines using a step size $\alpha_{\theta^-}=0.005$. The maximum length of the replay buffer was fixed to 500000. Rewards were always clipped to the range $[-20,20]$. We used a batch size 256. In terms of network topology, we tuned the number of nodes per layer and the number of hidden layers of each network, and chose the one that performed robustly across all 9 domains. Specifically we tried 1,2, and 3 hidden layers each having 128 or 512 nodes. For DDPG, TD3, and input-convex neural nets, we also tried network topology parameters according to the settings suggested in the original papers.

To determine the best hyper-parameters, we ran each algorithm for 3 independent runs using the chosen hyper-parameter setting, and selected the one that performed best on average. Best performance was defined as the highest average episodic return after training was done. We now move to domain-dependent hyper-parameters.
\subsection{RBF-DQN}
For each domain, we tuned the smoothing parameter $\beta$ using random search \cite{deep_learning_book} from the range $[0.1,3]$. We also tuned the learning rate for RMSProp using random search from the range $[5\times 10^{-6},5\times 10^{-2}]$. All other hyper-parameters were kept constant across different domains.
\subsection{Feed-forward Network}
For each domain, we tuned tuned the learning rate for RMSProp using random search from the range $[5\times 10^{-6},5\times 10^{-2}]$. We also tuned the parameters of gradient ascent optimizer for solving $\max_{a\in\fancyA} \Qhat(s,a;\theta)$. In particular, there were two parameters, namely a step size that was tuned using random search from the range $[0.0001,0.1]$, and the number of gradient ascent steps which was tuned using grid search from $\{10,20,50\}$. All other hyper-parameters were kept constant across different domains.
\subsection{Wire Fitting}
For each domain, we tuned the learning rate for RMSProp using random search from the range $[5\times 10^{-6},5\times 10^{-2}]$. All other hyper-parameters were kept constant across different domains.
\subsection{Input-Convex Neural Network}
For each domain, we tuned tuned the learning rate for RMSProp using random search from the range $[5\times 10^{-6},5\times 10^{-2}]$. All other hyper-parameters were kept constant across different domains.
\subsection{DDPG}
For each domain, we tuned the two learning rates from the range $[5\times 10^{-6},5\times 10^{-2}]$. Note that the learning rates for the value network and the policy network were tuned separately. For each individual domain, we tried two optimizers, namely RMSProp and Adam \cite{adam}. All other hyper-parameters were kept constant across different domains.
\subsection{TD3}
Similar to DDPG, we tuned the two learning rates for the value network and the policy network using the same range. We again tried two optimizers, namely RMSProp and Adam \cite{adam}. \citet{TD3} also introduced a $\sigma$ parameter for target policy regularization in TD3. We tuned this hyper-parameter for each domain as well using grid search and from $\{0.01,0.05,0.1,0.2\}$. All other hyper-parameters were kept constant across different domains.
\newpage
\section{A Comparison based on Final Performance}
\begin{figure*}[h]
    \centering
    \includegraphics[width=0.95\columnwidth]{plots/RL_final_results.pdf}
    \caption{A comparison between RBF-DQN and different deep-RL baselines based on final performance.}
    \label{fig:RL_final_results}
\end{figure*}
\bibliographystyle{icml2020}
\bibliography{bibfile}